\newif\ifpreprint
\newif\ifrevision
\newif\ifrc
\def\@acknow{}%
\long\def\EarlyAcknow#1 \par{%
\def\@acknow{\abstractfont\abstracthead*{Acknowledgments}
#1\par}}%
\def\printabstract{
\ifx\@acknow\empty\else\@acknow\fi\par%
\ifx\@abstract\empty\else\@abstract\fi\par
}
\DeclareMathOperator*{\argmin}{\arg\!\min}
\newcommand{\reviewerA}{\color{RoyalBlue}}
\newcommand{\reviewerB}{\color{ForestGreen}}
\newcommand{\reviewerA}{\color{black}}
\newcommand{\reviewerB}{\color{black}}
\newcommand{\erevision}{\color{black}}
\newcommand{\eneedsrevision}{\color{black}}
\newtheorem{theorem}{Theorem}
\newtheorem{definition}{Definition}
\theoremstyle{definition}
\newtheorem*{derivation*}{Derivation}
\newtheorem*{example}{Example}
\newcommand\xrowht[2][0]{\addstackgap[.5\dimexpr#2\relax]{\vphantom{#1}}}
\begin{document}

\begin{acronym}
\acro{name}[ABCD]{Adaptive Bernstein Change Detector}
\acro{pca}[PCA]{Principal Component Analysis}
\end{acronym}

\title{Adaptive Bernstein Change Detector for High-Dimensional Data Streams}

\author*{\fnm{Marco} \sur{Heyden}*} \email{marco.heyden@kit.edu}
\author{\fnm{Edouard} \sur{Fouch\'{e}}} \author{\fnm{Vadim} \sur{Arzamasov}}
\author{\fnm{Tanja} \sur{Fenn}}
\author{\fnm{Florian} \sur{Kalinke}}
\author{\fnm{Klemens} \sur{B\"{o}hm}} 
\email{firstname.lastname@kit.edu}

\affil{\orgname{Karlsruhe Institute of Technology}, \orgaddress{\city{Karlsruhe}, \country{Germany}}}

\abstract{
Change detection is of fundamental importance when analyzing data streams. Detecting changes both quickly and accurately enables monitoring and prediction systems to react, e.g., by issuing an alarm or by updating a learning algorithm. However, detecting changes is challenging when observations are high-dimensional. In high-dimensional data, change detectors should not only be able to identify when changes happen, but also in which subspace they occur. Ideally, one should also quantify how severe they are. 
Our approach, ABCD, has these properties. ABCD learns an encoder-decoder model and monitors its accuracy over a window of adaptive size. ABCD derives a change score based on Bernstein's inequality to detect deviations in terms of accuracy, which indicate changes. Our experiments demonstrate that ABCD outperforms its best competitor by up to 20\% in F1-score on average. It can also accurately estimate changes' subspace, together with a severity measure that correlates with the ground truth. 
}

\EarlyAcknow{
This work was supported by the German Research Foundation (DFG) Research Training Group GRK 2153:
\textit{Energy Status Data --- Informatics Methods for its Collection, Analysis and Exploitation.}
\newline

This version of the article has been accepted for publication, after peer review but is not the Version of Record and does not reflect post-acceptance improvements, or any
corrections. The Version of Record is available online at: \href{https://doi.org/10.1007/s10618-023-00999-5}{https://doi.org/10.1007/s10618-023-00999-5}.
}

\keywords{change detection, concept drift, data streams, high-dimensionality}

\maketitle

\section{Introduction}\label{sec:introduction}
Data streams are open-ended, ever-evolving sequences of observations from some process. They pose unique challenges for analysis and decision-making. One crucial task is to detect changes, i.e., shifts in the observed data, that may indicate a change in the underlying process. Change detection has been an active research area. However, the high-dimensional setting, in which observations contain a large number of simultaneously measured quantities, did not receive enough attention. \reviewerB Yet, it may yield useful insights in environmental monitoring~\citep{dejong_UnsupervisedChangeDetection_2019}, human activity recognition~\citep{vrigkas_ReviewHumanActivity_2015}, network traffic monitoring~\citep{naseer_LearningRepresentationsNetwork_2020}, automotive~\citep{liu_HighdimensionalDataAbnormity_2019}, predictive maintenance~\citep{xiaopingzhaojiaxinwu_FaultDiagnosisMotor_2018}, and biochemical engineering~\citep{mowbray_MachineLearningBiochemical_2021}: 

\begin{example}[Biofuel production]
    The production of fuel from biomass is a complex process comprising many interdependent process steps. Those include pyrolysis, synthesis, distillation, and separation. Many steps rely on (by-)products of other steps as reactants, leading to a highly interconnected system with many process parameters. A monitoring system tracks the process parameters to detect failures in the plant: (i) The system must detect changes in a large (i.e., high-dimensional) vector of process parameters, which may indicate failures. (ii) The system must find out which process parameters are affected by the change to allow for a targeted reaction. Since the system is very complex and has many interconnected components, change is often evident only when considering correlations between process parameters. An example would be the correlation between temperature and concentration fluctuations. So it is insufficient to monitor each process parameter in isolation. (iii) There can exist slight changes which only require minor adjustments and more severe ones that require immediate intervention to avoid a shutdown of the plant. The monitoring system should provide an estimate of the severity of change. 
\end{example}
\erevision

The example illustrates three requirements for modern change detectors:
\begin{itemize}
    \item \textbf{R1: Change point.} The primary task of change detectors is to identify that the data stream has changed and when it occurred.
    \item \textbf{R2: Change subspace.} 
    A change may only concern a subset of dimensions~--- the \emph{change subspace}. Change detectors for high-dimensional data streams should be able to identify such subspaces.
    \item \textbf{R3: Change severity.} 
    Quantifying relative change severity to distinguish between changes of different importance is essential to react appropriately. 
\end{itemize}

\reviewerB
Prior works already acknowledge the relevance of the above requirements~\citep{lu_learning_2019, webb_analyzing_2018}.
\erevision
However, fulfilling R1--R3 in combination remains challenging since they depend on each other: on the one hand, detecting changes in high-dimensional data is difficult because changes typically only affect few dimensions. Unaffected dimensions ``dilute'' a change (i.e., a change occuring in a subspace appears to be less severe in the full space). This might make changes harder to detect in all dimensions. 
On the other hand, detecting the change subspace should occur \emph{after} detecting a change, since monitoring all possibles subspaces is intractable. Last, one should restrict computation of change severity to the change subspace to eliminate dilution.

Existing methods for change detection, summarized in \Cref{tab:related-approaches}, either are univariate (UV), multivariate (MV), or specifically designed for high-dimensional data (HD); the latter claim efficiency w.r.t. high-dimensionality or resilience against the ``curse of dimensionality''. However, they do not fulfill R1-R3 in combination sufficiently well as \Cref{sec:relatedwork} describes. 

Thus, we propose the \acf{name}, which addresses R1-R3 in combination. We articulate our contributions as follows: 

\textbf{(i)~Problem Definition:} We formalize the problem of detecting changes in high-dimensional data streams such that R1-R3 can be tackled in combination.
\textbf{(ii)~\acl{name}:} We present ABCD, a change detector for high-dimensional data, that satisfies R1--R3. 
\reviewerB
It monitors the loss of an encoder-decoder model using an adaptive window size and statistical testing. Adaptive windows enable ABCD to detect severe changes quickly and, over a longer period, identify hard-to-detect changes that would typically require a large window size. 
\erevision
\textbf{(iii)~Bernstein change score:} Our approach applies a statistical test based on Bernstein's inequality. This limits the probability of false alarms. 
\reviewerB
\textbf{(iv)~Online computation:} We propose an efficient method for computing the change score in adaptive windows and discuss design choices leading to constant time and memory.
\erevision
\textbf{(v)~Benchmarking:}
\reviewerB
We conduct experiments on 10 data streams based on real-world and synthetic data with many dimensions and compare \ac{name} with recent approaches. The results indicate that ABCD outperforms its competitors consistently w.r.t. R1--R3, is robust to high-dimensional data and is useful in domains including human activity recognition, gas detection, and image processing. 
\erevision
We also study \ac{name}'s parameter sensitivity. Our code\footnote{\url{https://github.com/heymarco/AdaptiveBernsteinChangeDetector}} follows the popular Scikit-Multiflow API~\citep{montiel_scikit-multiflow_2018}, so it is easy to use in future research.

\section{Related work} \label{sec:relatedwork}

\begin{table}[tb]
    \centering
    \caption{Related work.}
    \begin{tabular}{llcccc}
    \toprule
         Approach & Reference & Type & R1 & R2 & R3 \\
         \midrule
         ADWIN & \cite{bifet_learning_2007} & UV & \checkmark & -- & --\\
         SeqDrift2 & \cite{pears_detecting_2014}& UV & \checkmark & -- & -- \\
         kdq-Tree & \cite{dasu_information-theoretic_2006}& MV & \checkmark &--&\checkmark\\
         PCA-CD & \cite{qahtan_pca-based_2015} & MV & \checkmark & --&\checkmark\\
         IKS & \cite{dos_reis_fast_2016}& MV & \checkmark & \checkmark & -- \\ 
         LDD-DSDA & \cite{liu_regional_2017}& MV & \checkmark & -- & --\\
         AdwinK & \cite{faithfull_combining_2019}& MV & \checkmark  &\checkmark& --\\
         D3 & \cite{gozuacik_unsupervised_2019} & MV & \checkmark & -- & \checkmark \\
         ECHAD & \cite{ceci_ECHADEmbeddingbasedChange_2020}& MV & \checkmark & --& \checkmark \\
         IBDD & \cite{souza_unsupervised_2020}& HD & \checkmark & --& \checkmark \\
         WATCH & \cite{faber_watch_2021}& HD & \checkmark & --& \checkmark \\ 
         \textbf{\ac{name}} & this work& HD & \checkmark &\checkmark&\checkmark\\
         \bottomrule
    \end{tabular}
    \label{tab:related-approaches}
\end{table}

\subsection{Change detector types} 

Most existing change detectors are \textit{supervised}, i.e., they focus on detecting changes in the relationship between input data and a target variable~\citep{iwashita_overview_2019}. However, class labels are rarely available in reality, which limits their applicability. 
On the contrary, the \emph{ unsupervised} change detectors aim to detect changes only in the input data. Our approach belongs to this category, so we restrict our review to unsupervised approaches.

Most existing approaches detect changes whenever a measure of discrepancy between newer observations (the current window) and older observations (the reference window) exceeds a threshold. Some approaches, e.g., D3~\citep{gozuacik_unsupervised_2019} or PCA-CD~\citep{qahtan_pca-based_2015}, implement the reference and current window as two contiguous sliding windows. Other approaches, such as IBDD~\citep{souza_unsupervised_2020}, IKS~\citep{dos_reis_fast_2016} or WATCH~\citep{faber_watch_2021} use a fixed reference window.
A major problem is to choose the appropriate size for the window;~ thus \citep{bifet_learning_2007} propose windows of adaptive size, that grow while the stream remains unchanged and shrink otherwise. Several work leverage this principle, e.g. \citep{sun_online_2016,khamassi_self-adaptive_2015,fouche_scaling_2019,suryawanshi_adaptive_2022}.
We also use adaptive windows to lower the number of parameters of \ac{name}.

\subsection{Univariate change detection} 

There exist many approaches for change detection in univariate (UV) data streams. Two of them, Adaptive Windowing (ADWIN) \citep{bifet_learning_2007} and SeqDrift2 \citep{pears_detecting_2014}, share some similarity with our approach. Like ADWIN, \ac{name} relies on an adaptive window. Like SeqDrift2, it uses Bernstein's inequality \citep{bernstein_modification_1924}. 
But unlike ADWIN and SeqDrift2, 
ABCD can handle high-dimensional data while fulfilling R1-R3.

\subsection{Multivariate change detection} \label{sec:multivariate}

To detect changes in multivariate (MV) 
data, some approaches apply univariate algorithms in each dimension of the stream. \cite{faithfull_combining_2019} propose to use one ADWIN detector per dimension (with $k$ dimensions). They declare a change whenever a certain fraction of the detectors agree. We call this approach AdwinK later on. Similarly, IKS \citep{dos_reis_fast_2016} uses an incremental variant of the Kolmogorov-Smirnov test deployed in each dimension. Unlike AdwinK, IKS issues an alarm if at least one dimension changes.

There also exist approaches specifically designed for multivariate \citep{jaworski_concept_2020, ceci_ECHADEmbeddingbasedChange_2020, qahtan_pca-based_2015, gozuacik_unsupervised_2019, dasu_information-theoretic_2006}, or even high-dimensional (HD) data \citep{faber_watch_2021, souza_unsupervised_2020}. 
\reviewerB 
Similar to ABCD, \cite{jaworski_concept_2020} and \cite {ceci_ECHADEmbeddingbasedChange_2020} use dimensionality-reduction methods to capture the relationships between dimensions. However, our approach is computationally more efficient, limits the probability of false alarms, identifies change subspace, and estimates change severity. 
\erevision
D3 \citep{gozuacik_unsupervised_2019} uses the AUC-ROC score of a discriminative classifier that tries to distinguish the data in two sliding windows. 
It reports a change if the AUC-ROC score exceeds a pre-defined threshold. PCA-CD~\citep{qahtan_pca-based_2015} first maps observations in two windows to fewer dimensions using PCA. Then the approach estimates the KL-divergence between both windows for each principal component. PCA-CD detects a change if the maximum observed KL-divergence exceeds a threshold. 
However, \citep{goldenberg_survey_2019} point out that this technique is limited to linear transformations and ignores combined change in multiple dimensions. 
LDD-DSDA \citep{liu_regional_2017} measures the degree of local drift that describes regional density changes in the input data. The approach proposed by \citep{dasu_information-theoretic_2006} structures observations from two windows (sliding or fixed) in a kdq-tree. 
For each node, they measure the KL-divergence between observations from both windows. However, \,\citep{qahtan_pca-based_2015} show experimentally that this approach is not suitable for high-dimensional data. 

IBDD~\citep{souza_unsupervised_2020} and WATCH~\citep{faber_watch_2021} specifically address challenges arising from high-dimensional data. The former monitors the mean squared deviation between two equally sized windows. The latter monitors the Wasserstein distance between a reference and a sliding window. However, both cannot detect change subspaces or measure severity. 

\subsection{Offline change point detection}

Offline change point detection, also known as signal segmentation, divides time series of a given length into $K$ homogeneous segments \citep{truong_selective_2020}. 
Many of the respective algorithms are not suitable for data streams:
Some require specifying $K$ a priori~\citep{bai_critical_2003, harchaoui_retrospective_2007, lung-yut-fong_homogeneity_2015}; others~\citep{killick_optimal_2012, lajugie_large-margin_2014, matteson_nonparametric_2014, chakar_robust_2017, garreau_consistent_2018} scale superlinearly with time. WATCH~\citep{faber_watch_2021}, discussed above, is the state of the art extension of offline change point detection to data streams.

\subsection{Change subspace}

The notion of a \emph{change subspace} is different from the existing notion of \emph{change region} \citep{lu_learning_2019}. The former describes a subset of dimensions that changed, the latter identifies density changes in some local region, e.g., a hyper-rectangle or cluster \citep{liu_regional_2017}. Our definition of change subspaces is related to \textit{marginal change magnitude} \citep{webb_analyzing_2018}, but is more general since it can also accomodate changes in a subspace's joint distribution.

Because high-dimensional spaces are typically sparse (due to the curse of dimensionality), identifying density changes in them is not effective. On the other hand, knowing that a change affected a specific set of dimensions can help identify the cause of the change, as we have motivated in our introductory example. Thus, we focus on detecting change subspaces in this work.

In the domain of statistical process control, some approaches extend well-known methods, such as Cusum~\citep{page_continuous_1954} or Shewhart charts~\citep{shewhart_economic_1931}, to multiple dimensions. They address the problem of identifying change subspaces to some extent, however, they often make unrealistic assumptions: they focus on Gaussian or sub-Gaussian data \citep{chaudhuri_sequential_2021, xie_sequential_2020}, require that different dimensions are initially independent \citep{chaudhuri_sequential_2021}, require subspace changes to be of low rank~\citep{xie_sequential_2020}, or assume that the size of the change subspace is known a priori \citep{jiao_subspace_2018}. 

From the approaches reviewed in \Cref{sec:multivariate} only AdwinK and IKS identify the corresponding change subspace. However, both approaches do not find changes that hide in subspaces, e.g., correlation changes, 
because they monitor each dimension in isolation. In contrast, our approach aims to learn the relationships between different dimensions so that it can detect such changes. Next, AdwinK cannot identify subspaces with fewer than $k$ dimensions.

\subsection{Change severity}

According to \citep{lu_learning_2019}, change severity is a positive measure of the discrepancy between the data observed before and after the change. 
One can either measure the divergence between distributions directly, as done by kdq-Tree\,\citep{dasu_information-theoretic_2006}, LDD-DSDA\,\citep{liu_regional_2017}, and WATCH\,\citep{faber_watch_2021}, or indirectly with a score that correlates with change severity, as done by D3\,\citep{gozuacik_unsupervised_2019}. Following this reasoning, an approach that satisfies R3 should compute a score that depends on the change severity \citep{gozuacik_unsupervised_2019, dasu_information-theoretic_2006, souza_unsupervised_2020, qahtan_pca-based_2015, faber_watch_2021}, i.e., the higher the score, the higher the severity. Finally, 
hypothesis-testing-based approaches, such as ADWIN\,\citep{bifet_learning_2007}, SeqDrift2\, \citep{pears_detecting_2014}, AdwinK\,\citep{ faithfull_combining_2019}, or IKS \citep{dos_reis_fast_2016}, do not quantify change severity: a slight change observed over a longer time can lead to the same $p$-value as a severe change observed over a shorter time, hence $p$ is not informative about change severity. 

\reviewerB
\subsection{Pattern based change detection}\label{sec:pbcd}

A related line of research, pattern-based change detection, deals with identifying changes in temporal graphs~\citep{loglisci_MiningMicroscopicMacroscopic_2018,impedovo_EfficientAccurateNonexhaustive_2019,impedovo_CondensedRepresentationsChanges_2020,impedovo_SimultaneousProcessDrift_2020}. In particular, \cite{loglisci_MiningMicroscopicMacroscopic_2018} detect changes in the graph, identify the affected subgraphs, and quantify the amount of change for these subgraphs. 
This is similar to our methodology. However, these methods work well with graph data, but we are dealing with vector data. To apply these methods in our context, one would need to create a graph, e.g., by representing each dimension as a node and indicating pairwise correlations with edges. However, constructing such a graph becomes impractical for high-dimensional observations because of the exponentially growing number of subspaces.
\eneedsrevision

\subsection{Competitors}
In our experiments, we compare to AdwinK, IKS, D3, IBDD, and WATCH. IBDD, WATCH, and D3 are recent change detectors for multivariate and high-dimensional data that fulfill R3. AdwinK extends the ADWIN algorithm to the multivariate case and fulfills R2. Finally, IKS is the only approach employing a non-parametric two-sample test for change detection while also satisfying R2. 

\section{Preliminaries} \label{sec:preliminaries}

We are interested in finding changes in the last $t$ observations $S=(x_1, x_2, \ldots, x_t)$ from a stream of data. Each $x_i$ is a $d$-dimensional vector independently drawn from a (unknown) distribution $F_i$. 
We assume without loss of generality that each vector coordinate is bounded in $[0,1]$, i.e., $x_i \in [0,1]^d$.

\begin{definition}[Change] A change occurs at time point $t^*$ if the data-generating distribution changes after $t^*$: $F_{t^*} \neq F_{t^*+1}$.  
\end{definition}

In high-dimensional data, changes typically affect only a subset of dimensions, which we call the \emph{change subspace}. Let $D=\{1,2,\dots,d\}$ be the set of dimensions and $F^{D'}_i$ be the joint distribution of $F_i$ observed in the subspace $D' \subseteq D$ at time step $i$. We define the change subspace as follows: 

\begin{definition}[Change subspace]
The change subspace $D^*$ at time $t^*$ is the union of all $D' \subseteq D$ in which the joint distribution $F^{D'}$ changed and which does not contain a subspace $D''$ for which $F^{D''}_{t^*} \neq F^{D''}_{t^*+1}$. 
\end{definition} 

If the dimensions in $D^*$ are uncorrelated, then changes will be visible on the marginal distributions, i.e., all $D'$ are of size~1. However, changes may only be detectable w.r.t the joint distribution of $D^*$ or the union of its subspaces of size greater than~1, which our definition accommodates. Note that the definition can also handle multiple co-occurring changes and considers them as one single change.
Last, change severity measures the difference between $F^{D^*}_{t^*}$ and $F^{D^*}_{t^*+1}$: 

\begin{definition}[Change severity]
The \textit{severity} of a change is a positive function $\Delta$ of the mismatch between $F^{D^*}_{t^*}$ and $F^{D^*}_{t^*+1}$. 
\end{definition}

Since we do not know the true distributions $F_{t^*}$ and $F_{t^*+1}$, the best we can do is detecting changes and their characteristics based on the observed data. 

\section{Approach} \label{sec:approach}

\subsection{Principle of ABCD}\label{sec:overview}

Direct comparison of high-dimensional distributions is impractical as it requires many samples~\citep{gretton_kernel_2012}. 
Yet the number of variables required to describe such data with high accuracy is often much smaller than $d$~\citep{lee_nonlinear_2007}. Dimensionality reduction techniques let us \textit{encode} observations in fewer dimensions. 
The more information encodings retain, the better one can reconstruct (\emph{decode}) the original data. However, if the distribution changes, the reconstruction will degrade and produce higher errors. 

\begin{figure}[t]
    \centering
    \includegraphics[width=\linewidth]{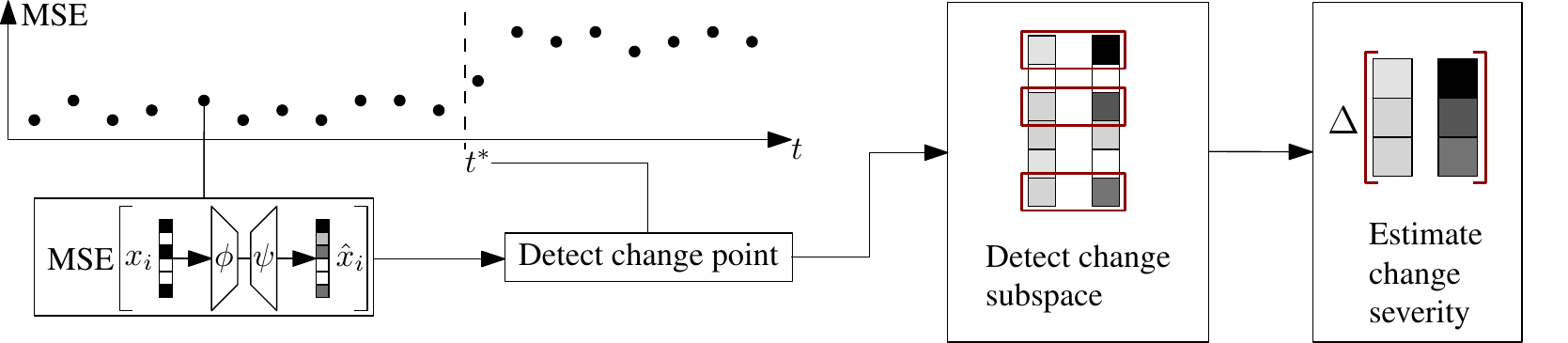}
    \caption{Overview of ABCD.}
    \label{fig:concept}
\end{figure}

We leverage this principle in ABCD by monitoring the reconstruction loss of an encoder-decoder model $\psi\circ\phi$ for some encoder function $\phi$ and decoder function $\psi$. \Cref{fig:concept} illustrates this. 
Specifically, we first learn $\phi: [0,1]^d \to [0,1]^{d'}$ with $d' = \lfloor\eta d\rfloor < d$, $\eta \in (\nicefrac{1}{d},1)$, mapping the data to fewer dimensions, and $\psi: [0,1]^{d'} \to [0,1]^d$. Then, we monitor the loss between each $x_t$ and its reconstruction $\hat{x}_t = \psi\circ\phi(x_t) = \psi(\phi(x_t))$:
\begin{equation}
    L_t = MSE(x_t, \hat{x}_t) = \frac{1}{d} \sum_{j=1}^{d} \left(x_{t,j} - \hat{x}_{t,j}\right)^2 = \frac{1}{d} \sum_{j=1}^d L_{t,j}
\end{equation} 

We hypothesize that distribution changes lead to outdated encoder-decoder models~--- see for example~\citep{jaworski_concept_2020} for empirical evidence. Hence, we assume that changes in the reconstruction affect the \emph{mean} $\mu_{t^*+1}$ of the loss, because the model can no longer accurately reconstruct the input:
\begin{equation}
\label{eq:implication}
F_{t^*} \neq F_{t^*+1} \implies \mu_{t^*} \neq \mu_{t^*+1}
\end{equation}

We can now replace the definition of change in high-dimensional data with an easier-to-evaluate, univariate proxy:
\begin{equation}\label{eq:concept-change-simplified}
    \exists t^* \in [1, \ldots, t]: \mu_{t^*} \neq \mu_{t^*+1}
\end{equation}
It allows detecting arbitrary changes in the original (high-dimensional) distribution as long as they affect the average reconstruction loss of the encoder-decoder. Since the true $\mu_{t^*}$ and $\mu_{t^*+1}$ are unknown, we estimate them from the stream:
\begin{equation}
    \hat{\mu}_{1,t^*} = \frac{1}{t^*} \sum_{i=1}^{t^*} L_i, \quad \hat{\mu}_{t^*+1,t} = \frac{1}{t-t^*} \sum_{i=t^*+1}^{t} L_i.
\end{equation} 

\subsection{Detecting the change point}\label{sec:bernstein}

ABCD detects a change at $t^*$ if  $\hat\mu_{1,t^*}$ differs \emph{significantly} from $\hat\mu_{t^*+1,t}$. To quantify this, we derive a test based on Bernstein's inequality~\citep{bernstein_modification_1924}. It is often tighter than more general alternatives like Hoeffding's inequality~\citep{boucheron_concentration_2013}. Let $\hat \mu_1, \hat \mu_2$ be the averages of two independent samples from two univariate random variables. 
One wants to evaluate if both random variables have the same expected values: The null hypothesis $H_0$ is $\mu_1 = \mu_2$.
Based on the two samples, one rejects $H_0$ if $\Pr\left(\lvert\hat \mu_1 - \hat \mu_2\rvert\ \geq \epsilon\right) \leq \delta$ where $\delta$ is a preset significance level.
The following theorem allows evaluating \Cref{eq:concept-change-simplified} based on Bernstein's inequality. 

\begin{theorem}[Bound on $\Pr\left(\lvert\hat{\mu}_1 - \hat{\mu}_2\rvert \geq \epsilon\right)$]\label{theorem:change-score} 
Given two independent samples $X_1,X_2$ of size $n_1$ and $n_2$ from two random variables with unknown expected values $\mu_1, \mu_2$ and variances $\sigma^2_1, \sigma^2_2$. Let $\hat{\mu}_1, \hat{\mu}_2$ denote the sample means and let $\lvert\mu_1-x_i\rvert < M$ for all $x_i\in X_1$ and $\lvert\mu_2-x_i\rvert < M$ for all $x_i\in X_2$ respectively. 
Assuming $\mu_1 = \mu_2$, we have:
\begin{multline}
    \label{eq:bernstein-p}
    \Pr\left(\lvert\hat{\mu}_1 - \hat{\mu}_2\rvert \geq \epsilon\right) \leq \\ 2\exp\left\{-\frac{n_1(\kappa\epsilon)^2}{2\left(\sigma_1^2 + \frac{1}{3}\kappa M \epsilon\right)}\right\} + 2\exp\left\{-\frac{n_2((1-\kappa)\epsilon)^2}{2\left(\sigma_2^2 + \frac{1}{3}(1-\kappa) M \epsilon\right)}\right\} \in (0,4] \\ \forall\kappa \in [0, 1].
\end{multline}
\end{theorem}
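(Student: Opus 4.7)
The plan is to reduce the two-sample statement to two one-sample deviation bounds and then invoke the classical one-sided Bernstein inequality on each. The key algebraic identity I would use is that, since $\mu_1 = \mu_2$ under the null,
\begin{equation*}
\hat{\mu}_1 - \hat{\mu}_2 = (\hat{\mu}_1 - \mu_1) - (\hat{\mu}_2 - \mu_2),
\end{equation*}
so by the triangle inequality $|\hat{\mu}_1 - \hat{\mu}_2| \leq |\hat{\mu}_1 - \mu_1| + |\hat{\mu}_2 - \mu_2|$.

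First I would split the deviation budget. For any fixed $\kappa \in [0,1]$, the contrapositive of the triangle inequality gives
\begin{equation*}
\{|\hat{\mu}_1 - \hat{\mu}_2| \geq \epsilon\} \subseteq \{|\hat{\mu}_1 - \mu_1| \geq \kappa\epsilon\} \cup \{|\hat{\mu}_2 - \mu_2| \geq (1-\kappa)\epsilon\},
\end{equation*}
because if both terms on the right-hand side fail, their sum is strictly less than $\epsilon$ and hence the left-hand event cannot occur. Independence of $X_1$ and $X_2$ is not needed here, but applying the union bound yields
\begin{equation*}
\Pr(|\hat{\mu}_1 - \hat{\mu}_2| \geq \epsilon) \leq \Pr(|\hat{\mu}_1 - \mu_1| \geq \kappa\epsilon) + \Pr(|\hat{\mu}_2 - \mu_2| \geq (1-\kappa)\epsilon).
\end{equation*}

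Second, I would apply Bernstein's inequality to each summand. For the sample $X_j$ of size $n_j$ with per-sample variance bound $\sigma_j^2$ and boundedness $|\mu_j - x_i| < M$, the standard two-sided Bernstein bound states
\begin{equation*}
\Pr(|\hat{\mu}_j - \mu_j| \geq t) \leq 2\exp\!\left\{-\frac{n_j t^2}{2(\sigma_j^2 + \tfrac{1}{3} M t)}\right\}.
\end{equation*}
Substituting $t = \kappa\epsilon$ for $j=1$ and $t = (1-\kappa)\epsilon$ for $j=2$ reproduces exactly the two exponential terms in the statement. Finally, the range $(0,4]$ follows because each exponential term is bounded above by $1$ (the exponents are non-positive), so their doubled sum is at most $4$, and strict positivity holds since each exponential is strictly greater than $0$. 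The quantifier over all $\kappa \in [0,1]$ comes for free because $\kappa$ was arbitrary in the splitting step.

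The only mild subtlety, and hence the main obstacle, is handling the boundary cases $\kappa \in \{0,1\}$, where one of the two Bernstein applications degenerates (the right-hand side of that Bernstein bound becomes $2\exp(0) = 2$, i.e.\ trivial). This is fine because the other exponential still provides a real bound, and the trivial factor is subsumed by the overall range $(0,4]$; no special-case argument is needed beyond noting that the usual Bernstein statement remains valid at $t=0$. Everything else is a direct algebraic substitution, so I do not foresee any technical difficulties.
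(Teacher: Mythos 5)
Your proposal is correct and follows essentially the same route as the paper: split the two-sample deviation via the triangle inequality (using $\mu_1=\mu_2$), apply the union bound with the $\kappa$/$(1-\kappa)$ budget, and substitute the two-sided Bernstein inequality into each term. You in fact spell out the event-inclusion step and the $(0,4]$ range more explicitly than the paper does, but there is no substantive difference in approach.
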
 

\begin{proof} 
We follow the same steps as in~\citep{bifet_learning_2007, pears_detecting_2014}.

Recall Bernstein's inequality: Let $x_1, \ldots, x_n$ be independent random variables with sample mean $\hat \mu = 1/n\sum x_i$ and expected value $\mu$ s.th. $\forall x_i: \lvert x_i - \mu\rvert \leq M$. Then, for all $\epsilon > 0$,
\begin{equation}
    \Pr\left(\lvert\hat{\mu} - \mu\rvert \geq \epsilon\right) \leq 2\exp\left\{-\frac{n\epsilon^2}{2\left(\sigma^2 + \frac{1}{3} M \epsilon\right)}\right\}.
\end{equation}
We apply the union bound to $\Pr\left(\lvert\hat{\mu}_1 - \hat{\mu}_2\rvert \geq \epsilon\right)$. For all $\kappa \in [0,1]$, we have:
\begin{multline}
\label{eq:union-bound}
    \Pr\left(\lvert\hat{\mu}_1 - \hat{\mu}_2\rvert \geq \epsilon\right) \leq \Pr\left(\lvert\hat{\mu}_1 - \mu_1 \rvert \geq \kappa \epsilon\right) + \Pr\left(\lvert\hat{\mu}_2 - \mu_2 \rvert \geq (1-\kappa) \epsilon\right) 
\end{multline}
Substituting above with Bernstein's inequality completes the proof. 
\end{proof}

With regard to change detection, one can use \Cref{eq:bernstein-p} to evaluate for a time point $k$ if a change occurred. 
The question is, however, how to choose $\epsilon$ to limit the probability of false alarm at any time $t$ to a maximum $\delta$.

Our approach is to set $\epsilon$ to the observed $\lvert\hat{\mu}_{1,k} - \hat{\mu}_{k+1,t}\rvert$ and to set $n_1 = k$, $n_2 = t-k$. The result bounds the probability of observing $\lvert\hat{\mu}_{1,k} - \hat{\mu}_{k+1,t}\rvert$ between two independent samples of sizes $k$ and $t-k$ under $H_0$. 
If this probability is very low, the distributions must have changed at $k$. 
Then, we search for changes at multiple time points $k$ in the current window. Hence, we obtain multiple such probability estimates; our change score is their minimum:
\begin{equation}
\label{eq:change-score} 
p = \min_k \Pr\left(\lvert\hat \mu_1 - \hat \mu_2\rvert \geq \lvert \hat \mu_{1,k} - \hat \mu_{k+1,t} \rvert\right)
\end{equation} 
The corresponding change point $t^*$ splits $(L_1, L_2, \ldots, L_t)$ into the two subwindows with the statistically most different mean. 

\subsubsection{Choice of parameter $\kappa$}

The bound in \Cref{eq:bernstein-p} holds for any $\kappa \in [0,1]$. A good choice, however, provides a tighter estimate, resulting in faster change detection for a given rate of allowed false alarms $\delta$. \citep{bifet_learning_2007} suggest to choose $\kappa$ s.th. $Pr(\lvert \hat{\mu}_1 - \mu_1 \rvert \geq \kappa\epsilon) \approx Pr(\lvert \hat{\mu}_2 - \mu_2\rvert \geq (1- \kappa)\epsilon)$, that approximately minimizes the upper bound. Substituting both sides with Bernstein's inequality, we get
\begin{equation}
\frac{n_1 (\kappa\epsilon)^2}{\sigma_1^2 + \frac{ \kappa M \epsilon}{3}} = \frac{n_2 (1-\kappa)^2\epsilon^2}{\sigma_2^2 + \frac{ (1-\kappa) M \epsilon}{3} }.
\end{equation} 
Setting $n_1 = rn_2$ and simplifying, we have
\begin{equation}
\label{eq:kappa-intermediate}
    \frac{3 \sigma_1^2 +  \kappa M \epsilon}{r\kappa^2} = \frac{3\sigma_2^2 +  (1-\kappa) M \epsilon}{(1-\kappa)^2}.
\end{equation}

To solve for $\kappa$, note that $\lvert\hat{\mu}_{1,k} - \hat{\mu}_{k+1,t}\rvert \approx 0$ for large enough $k$ and $t-k$ while there is no change. This leads to a change score $p \gg \delta$ for any choice of $\kappa$. 
Hence, choosing $\kappa$ optimal is irrelevant while there is no change.

In contrast, if a change occurs, the change in the model's loss dominates the variance in both subwindows, leading to $ M\epsilon \gg \sigma_1^2, \sigma_2^2$. In that case, the influence of $\sigma_1^2, \sigma_2^2$ is negligible for sufficiently large $\kappa$ and $1-\kappa$: 
\begin{equation}
    \label{eq:kappa-intermediate2}
    \frac{\kappa M \epsilon}{r\kappa^2} = \frac{(1-\kappa) M \epsilon}{(1-\kappa)^2}.
\end{equation}
Solving \Cref{eq:kappa-intermediate2} for $\kappa$ results in our recommendation for $\kappa$ (\Cref{eq:kappa-final} which we restrict to $[\kappa_{min}, 1-\kappa_{min}]$ with $\kappa_{min}=0.05$.
\begin{equation}
\label{eq:kappa-final}
    \kappa = \frac{1}{1 + r} = \frac{n_2}{n_1 + n_2}
\end{equation} 

\subsubsection{Minimum sample sizes and outlier sensitivity}\label{sec:outlier-sensitivity}

This section investigates the conditions under which \ac{name} detects changes. 

We derive a minimum size of the first window above which ABCD detects a change. It bases on the fact that the number of observations before an evaluated time point $k$ remains fixed while the number of observations after $k$ grows with $t$. Those counts are $n_1=k$ and $n_2=t-k$ in \Cref{eq:bernstein-p}. Also, since we consider bounded random variables, their variance is bounded as well. Hence, the second term in \Cref{eq:bernstein-p} approaches 0 for any $\epsilon > 0$.
With this, solving \Cref{eq:bernstein-p} for $n_1$ yields:
\begin{equation}
    n_1 \geq \left\lceil2\log\left(\frac{2}{\delta}\right)\left(\frac{\sigma_1^2}{(\kappa\epsilon)^2} + \frac{M}{3\kappa\epsilon}\right)\right\rceil .
\end{equation}

By setting $\epsilon = \lvert \hat \mu_1 - \hat  \mu_2 \rvert$ we see that the required size of the first window decreases the larger the change in the average reconstruction error. For example, with $M=1$, $\epsilon = \sigma_1 = 0.1$, and $\delta = 0.05$ our approach requires $n_1 \geq 32$.

\reviewerB
Since ABCD detects changes in the average reconstruction loss of a bounded vector, it is stable with respect to outliers as long as they are reasonably rare. To see this, assume w.l.o.g. that window~1 contains $n_{out}$ outliers and that $\epsilon>0$. One can show that the average of the outliers, $\hat\mu_{out}$, must exceed the average of the remaining inliers, $\hat\mu_{in}$, by $n_1\epsilon/n_{out}$. In the example above, a single outlier would thus have to exceed $\hat\mu_{in}$ by $n_1\epsilon=3.2$. This, however, is impossible because $M=1$ bounds the reconstruction loss.
\erevision

\subsection{Detecting the change subspace}\label{sec:subspace}

After detecting a change, we identify the change subspace. 
Restricting the encoding size to $d' < d$ forces the model to learn relationships between different input dimensions. As a result, the loss observed for dimension $j$ contains not only information about the change in that dimension (i.e., the marginal distribution in $j$ changes), but also about correlations influencing dimension $j$. Hence, we can detect changes in the marginal- and joint-distributions by evaluating in which dimensions the loss changed the most. 

\Cref{alg:subspace} describes how we identify change subspaces. For each dimension $j$, we compute the average reconstruction loss (the squared error in dimension $j$) before and after $t^*$, denoted $\hat{\mu}^j_{1,t^*},\hat{\mu}^j_{t^*+1,t}$ (lines~5 and 6), and the standard deviation $\sigma^j_{1,t^*}, \sigma^j_{t^*+1,t}$ (lines~6 and 7). We then evaluate \Cref{eq:bernstein-p}, returning an upper bound on the $p$-value in the range $(0,4]$ for dimension $j$ (line~9). If $p_j < \tau \in [0,4]$, an external parameter for which we give a recommendation later on, we add $j$ to the change subspace (lines~10 and 11). 

\begin{algorithm}[htb]
\begin{algorithmic}[1]
\caption{Identification of change subspaces.}
\label{alg:subspace}
\Require{$(x_1, \hat x_1), \ldots, (x_t, \hat x_t)$, $t^*$}
\Procedure{\textsc{Subspace}}{}
\State $D^*\gets \emptyset$
\ForAll{$j \in 1, \ldots, d$}
\State $s \gets \left ( (x_{i,j}-\hat x_{i,j})^2 \ \forall i \in 1, \ldots, t \right)$ 
\State $\hat \mu^j_{1,t^*} = \frac{1}{t^*}\sum_{i=1}^{t^*} s_i, \quad \hat \mu^j_{t^*+1,t} = \frac{1}{t-t^*}\sum_{i=t^*+1}^{t} s_i$
\State $\sigma^j_{1,t^*} = \sqrt{ \frac{1}{t^*} \sum_{i=1}^{t^*} \left(s_i - \hat \mu^j_{1,t^*}\right)^2 }$
\State $\sigma^j_{t^*+1,t} = \sqrt{ \frac{1}{t-t^*} \sum_{i=t^*+1}^{t} \left(s_i - \hat \mu^j_{t^*+1,t}\right)^2 }$
\State $p_j \gets$ Evaluate \Cref{eq:bernstein-p} \Comment{Bernstein score}
\If{$p_j < \tau$}
\State $D^* \gets D^* \cup \left\{j\right\}$
\EndIf
\EndFor
\State \textbf{Return} $D^*$
\EndProcedure
\end{algorithmic}
\end{algorithm}

\subsection{Quantifying change severity}\label{sec:severity}

ABCD provides a measure of change severity in the affected subspace, based on the assumption that the loss in the change subspace increases with severity. Hence, we compute the average reconstruction loss observed in $D^*$ before and after the change, 
\begin{equation}
    \hat{\mu}_{1, t^*}^{D^*} = \frac{1}{\lvert D^*\rvert t^*} \sum_{i=1}^{t^*}\sum_{j\in D^*} L_{i,j}, \quad \hat{\mu}_{t^*+1, t}^{D^*} = \frac{1}{\lvert D^*\rvert (t-t^*)} \sum_{i=t^*+1}^{t}\sum_{j\in D^*} L_{i,j}
\end{equation}
and the standard deviation observed before the change:
\begin{equation}
    \sigma_{1,t^*}^{D^*} = \sqrt{\frac{1}{t^*} \sum_{i=1}^{t^*} \left(\hat\mu_i^{D^*} - \hat\mu^{D^*}_{1,t^*}\right)^2} \text{ with } \hat\mu_i^{D^*} = \frac{1}{\lvert D^* \rvert} \sum_{j\in D^*} L_{i,j}
\end{equation}
We then standard-normalize the average reconstruction loss $\hat{\mu}_{t^*+1}^{D^*}$ observed after the change:
\begin{equation}
\label{eq:severity}
    \Delta = \frac{\left\lvert\hat{\mu}_{t^*+1, t}^{D^*} - \hat{\mu}^{D^*}_{1,t^*}\right\rvert}{\sigma_{1,t^*}^{D^*}} \in \mathbb{R}^+
\end{equation} 
Intuitively, $\Delta$ is the standard deviation of model's loss on the new distribution. 

\subsection{Working with windows}\label{sec:stats}

In comparison to most approaches, \ac{name} evaluates multiple possible change points within an adaptive time interval $[1, \ldots, t]$. This frees the user from choosing the window size a-priori and allows to detect changes at variable time scales. Next, we discuss how to efficiently evaluate those time points.

\subsubsection{Maintaining loss statistics online}

To avoid recomputing average reconstruction loss values and their variance for multiple time points every time new observations arrive, we store Welford aggregates $A_{1,k}$ summarizing the stream in the interval $[1, \ldots, k]$. Each aggregate $A_{1,k}$ is a tuple containing the average reconstruction loss $\hat \mu_{1,k}$ and the sum of squared differences $ssd_{1,k} = k^{-1} \sum_{j = 1}^k L_j$.
We store these aggregates for the time interval $[1, \ldots, t]$.

\textbf{Creating a new aggregate.} Every time a new observation with loss $L_{t}$ arrives, we create a new aggregate based on the previous aggregate $A_{1,t-1} = (\hat{\mu}_{1,t-1}, ssd_{1,t-1})$ in $\mathcal{O}(1)$ using Welford's algorithm~\citep{knuth_art_1997}:
    \begin{equation}
    \label{eq:welford-1}
        \hat{\mu}_{1,t} = \hat{\mu}_{1,t - 1} + \frac{1}{t}(L_t - \hat{\mu}_{1,t-1})
    \end{equation}
    \begin{equation}
    \label{eq:welford-2}
        ssd_{1,t} = ssd_{1,t-1} + \left(L_{t} - \hat{\mu}_{1,t-1}\right)\left(L_{t}-\hat{\mu}_{1,t}\right)
    \end{equation}

\textbf{Computing the statistics.}
Two aggregates $A_{1,k}$ and $A_{1,t}$, $t>k$ overlap in the time interval $[1, \ldots, k]$. We leverage this overlap to derive an aggregate $A_{k+1,t} = (\hat \mu_{k+1,t}, ssd_{k+1,t})$ representing the time interval $[k+1, \ldots, t]$. \Cref{eq:mu-latter-sw} and \Cref{eq:ssd-latter-sw} are based on Chan's method for combining variance estimates of non-overlapping samples~\citep{chan_updating_1982}.
\begin{equation}
    \label{eq:mu-latter-sw}
    \hat{\mu}_{k+1,t} = \frac{1}{t - k}(t\hat{\mu}_{1,t}-k\hat{\mu}_{1,k})
\end{equation}
\begin{equation}
    \label{eq:ssd-latter-sw}
    ssd_{k+1,t} = ssd_{1,t} - ssd_{1,k} - \frac{k(t-k)}{t}\left(\hat{\mu}_{1,k} - \hat{\mu}_{k+1,t}\right)^2
\end{equation}

From $ssd_{1,k}$ and $ssd_{k+1,t}$ we can compute the sample variances as follows:
\begin{equation}
    \sigma^2_{1,k} = \frac{ssd_{1,k}}{k - 1},\quad \sigma^2_{k+1,t} = \frac{ssd_{k+1,t}}{t-k-1}
\end{equation}

\begin{derivation*} 
Given two non-overlapping samples $A = \{x_1, \ldots, x_m\}$ and $B= \{x_1, \ldots, x_n\}$ of a real random variable. Let $T_A = \sum_{i=1}^m x_i$ and $T_B = \sum_{i=1}^n x_i$ be the sums of the samples and $ssd_A = \sum_{i=1}^m (x_i - m^{-1}T_A)^2$ and $ssd_B = \sum_{i=1}^n (x_i - n^{-1}T_B)^2$ be the sums of squared distances from the mean.

For the union of both sets $AB = A \cup B$ we have $T_{AB} = T_A + T_B$, which is equivalent to $(m+n)\hat \mu_{AB} = m\hat \mu_A + n\hat \mu_B$. Solving for $\hat \mu_B$ gives
\begin{equation}
    \hat\mu_B = \frac{m+n}{n}\hat\mu_{AB} - \frac{m}{n} \hat\mu_A.
\end{equation}

Substituting $n=t-k$, $m=k$, $\hat\mu_A = \hat\mu_{1,k}$, $\hat\mu_B = \hat\mu_{k+1,t}$, and $\hat\mu_{1,t} = \hat\mu_{AB}$ gives \Cref{eq:mu-latter-sw}; next we derive \Cref{eq:ssd-latter-sw}. \cite{chan_updating_1982} state: 
\begin{equation}
    ssd_{AB} = ssd_A + ssd_B + \frac{m}{n(m+n)}\left(\frac{n}{m}T_A - T_B\right)^2,
\end{equation}
which is equivalent to
\begin{equation}
    ssd_{AB} = ssd_A + ssd_B + \frac{m}{n(m+n)}\Bigg(n\underbrace{\left(\frac{1}{m}T_A - \frac{1}{n}T_B\right)}_{=\hat\mu_A-\hat\mu_B}\Bigg)^2.
\end{equation} 

Solving for $ssd_B$, applying the former substitutions, and setting $ssd_A = ssd_{1,k}$, $ssd_B = ssd_{k+1,t}$, and $ssd_{1,t} = ssd_{AB}$ results in \Cref{eq:ssd-latter-sw}.
\end{derivation*}

\subsection{Implementation}\label{sec:implementation}

\paragraph{Algorithm}
One can implement \ac{name} as a recursive algorithm, see \Cref{alg:overview}, which restarts every time a change occurs. 
We keep a data structure $W$ that contains the aggregates, instances, and reconstructions. $W$ can either be empty, or, in the case of a recursive execution, already contain data from the previous run. 


Prior to execution, our algorithm must first obtain a model of the current data from an initial sample of size $n_{min}$. If necessary, ABCD allows enough instances to arrive (lines 5--7). 
Larger choices of $n_{min}$ allow for better approximations of the current distribution but delay change detection. Hence our recommendation is to set $n_{min}$ as small as possible to still learn the current distribution; a default of $n_{min} = 100$ has worked well for us. 

Afterwards, the algorithm trains the model using the instances in $W$ (lines 8--9). ABCD can in principle work with various encoder-decoder models; thus we deal with tuning the model only on a high level. Nonetheless, we give recommendations in our sensitivity study later on. 

After model training, \ac{name} detects changes. It reconstructs each new observation $x_{t+1}$ (line 11), creates a new aggregate $A_{1,t+1}$ (line 12), and adds $w_{t+1} \coloneqq (A_{1,t+1}, \hat{x}_{t+1}, x_{t+1})$ to $W$ (lines 13--14). Our approach then computes change score $p$ and change point $t^*$ (lines 15--16). If $p < \delta$, it detects a change. 

Once \ac{name} detects a change, it identifies the corresponding subspace and evaluates its severity (lines 21--22). Then it adapts $W$ by dropping the outdated part of the window (line 23), including all information obtained with the outdated model. 
At last, we restart ABCD with the adapted window (line 24).

\begin{algorithm}[htb]
\begin{algorithmic}[1]
\caption{\acf{name}}
\label{alg:overview}
\Require{The model $\psi\circ\phi$, threshold $\delta$, threshold $\tau$}
\Procedure{\textsc{ABCD}}{$W$}
\State $\psi\circ\phi \gets Null$; $t \gets \lvert W \rvert$ \Comment{Model not yet trained}
\While{new instance $x_{t+1}$} 
\If{$t < n_{min}$}\Comment{Warm up}
    \State $w_{t+1} \gets (-, -, x_{t+1})$
    \State $W \gets W\ \lvert\rvert\ w_{t+1}$ 
\ElsIf{$\psi\circ\phi = Null$}
    \State $\psi\circ\phi \gets \textsc{TrainModel(}W\textsc{)}$
\Else
\State $\hat x_{t+1} \gets \psi(\phi(x_{t+1}))$ \Comment{Reconstruct}
\State $A_{t+1} \gets$ update aggregate $A_t$ with $L_{t+1}$
\State $W \gets W\ \lvert\rvert\ (A_{t+1}, \hat x_{t+1}, x_{t+1})$
\State $p\gets$ \Cref{eq:change-score} \Comment{Bernstein score}
\State $t^* \gets \argmin_k$ of \Cref{eq:change-score}
\If{$p < \delta$} \Comment{A change occurred}
    \State $D^* \gets \textsc{Subspace}(W, t^*, \tau)$
    \State $\Delta \gets \textsc{Severity}(W, t^*, D^*)$
    \State $W \gets \{ (-, -, x_i)\ \forall w_i \in W: i > t^* \}$
    \State ABCD$(W)$ \Comment{Restart}
\EndIf

\EndIf
\EndWhile
\EndProcedure
\end{algorithmic}
\end{algorithm}

\paragraph{Discussion}

In the worst case our approach consumes linear time and memory because $W$ grows linearly with $t$. However, we can simply restrict the size of $W$ to $n_{max}$ items for constant memory or evaluate only $k_{max}$ window splits for constant runtime. In the latter case we split $W$ at every $t/k_{max}$th time point. 
Regarding $n_{max}$, it is beneficial that the remaining aggregates still contain information about all observations in $(1, \ldots, t)$. Hence, \ac{name} considers the \emph{entire} past since the last change even though one restricts the size of $W$.

\reviewerA
ABCD can work with any encoder-decoder model, such as deep neural networks. However, handling a high influx of new observations faster than the model's processing capability can be challenging.
\reviewerA
Assuming that $\psi\circ\phi\in\mathcal{O}(g(d))$ for some function $g$ of dimensionality $d$, the processing time of a single instance during serial execution is in $\mathcal{O}\left(g(d) + k_{max}\right)$.
Nevertheless, both the deep architecture components and the computation of the change score (cf. Equation \ref{eq:change-score}) can be executed in parallel using specialized hardware. 
\erevision

\reviewerB
Dimensionality reduction techniques are often already present in data stream mining pipelines, for example as a preprocessing step to improve the accuracy of a classifier~\citep{yan_effective_2006}. Reusing an existing dimensionality reduction model makes it is easy to integrate ABCD into an existing pipeline. 
\erevision

Bernstein's inequality holds for zero-centered bounded random variables that take absolute values of at maximum $M$ almost surely. While $M=1$ serves as a theoretical upper limit of the zero-centered reconstruction error $L_t - \mathbb{E}[L_t]$ for $x_t\in [0,1]^d$, we observe that this theoretical limit is very conservative in practice (cf.~\Cref{app:reconstruction-loss}).
In fact, observing an error of $1$ corresponds to an instance and reconstruction of $x = [0]^d$ and $\hat x = [1]^d$. This leads us to use $M=0.1$ in our experiments.

\section{Experiments} \label{sec:experiments}

This section describes our experiments and results. We first describe the experimental setting (\Cref{exp:setting}). Then we analyze ABCD's change detection performance (\Cref{exp:change-point}), its ability to find change subspaces and quantify change severity (\Cref{exp:subspace-severity}), and its parameter sensitivity (\Cref{exp:parameters}).

\subsection{Algorithms}\label{exp:setting}

We evaluate \ac{name} with different encoder-decoder models: (1) \ac{pca} ($d'=\eta d$), (2) Kernel-PCA ($d' = \eta d$, RBF-kernel), 
\reviewerA and (3) a standard fully-connected autoencoder model with one hidden ReLU layer ($d'=\eta d$) and an output layer with sigmoid activation. For (1) and (2), we rely on the default scikit-learn implementations. We implement the autoencoder (3) in pytorch and train it through gradient descent using $E$ epochs and an Adam optimizer with default parameters according to~\cite{kingma_adam_2015}; see \Cref{app:ae-training} for pseudocode of the autoencoder training procedure. 
\erevision

We compare ABCD with AdwinK, IKS, IBDD, WATCH, and D3 (c.f. \Cref{sec:relatedwork}). We evaluate for each approach a large grid of parameters, shown in \Cref{tab:hyperparameters}. 
\reviewerA
Whenever possible, the evaluated grids of hyperparameters for competitors base on recommendations in respective papers. Otherwise, we choose them based on preliminary experiments. For ABCD, we evaluate larger and smaller values for $\delta$, $\eta$ and $E$ to observe our approach's sensitivity to those parameters. 
The choice of $\tau=2.5$ is our recommended default based on our sensitivity study in \Cref{exp:parameters}. Last, we set $n_{min}=100$ and $k_{max}=20$, minimum values that have worked well in preliminary experiments. 
\erevision

\begin{table}[tb]
    \centering
    \caption{Evaluated approaches and their parameters.}
    \begin{threeparttable}
    \begin{tabular}{lll}
    \toprule
        Algorithm& Parameter& Values\\
        \midrule
         ABCD &model& PCA, Kernel PCA, Autoencoders \\
         & $\delta$ & $0.05$\\
         & $\eta$ & $0,3, 0.5, 0.7$ \\
         & $E$\tnote{\dag}
         & $20, 50, 100$ \\
         & $n_{min};k_{max};\tau$ & 100; 20; 2.5 \\
         \midrule
         AdwinK& $k$ & $0.01\tnote{*}, 0.05\tnote{*}, 0.1\tnote{*}, 0.2\tnote{*}, 0.3\tnote{*}, 0.4\tnote{*}, 0.5\tnote{*}$\\
         &$\delta$& $0.05$\\ 
         \midrule
         D3 & $\omega$ & $100\tnote{*}, 250\tnote{*}, 500\tnote{*}$\\
         & $\rho$ & $0.1\tnote{*}, 0.2\tnote{*}, 0.3\tnote{*}, 0.4\tnote{*}, 0.5\tnote{*}$\\
         & $\tau$ & $0.6\tnote{*}, 0.7\tnote{*}, 0.8\tnote{*}, 0.9\tnote{*}$\\
         &model& Logistic Regression\tnote{*}, Decision Tree\\
         & tree depth & 1, 3, 5\\
         \midrule
         IBDD & $\omega$ & $100, 200, 300$\\
         & $m$& $10, 20, 50, 100$\\
         \midrule
         IKS & $W$ & $100\tnote{*}, 200, 500\tnote{*}$ \\
         & $\delta$ & $0.05$ \\
         \midrule
         WATCH\tnote{\ddag}& $\omega$ & $500, 1000$\\
         &$\kappa$& $100$\\
         & $\epsilon$ & $2, 3$\\
         &$\mu$& $1000, 2000$\\
         \bottomrule
    \end{tabular}
    \begin{tablenotes}
    \item[*]used or recommended in the respective papers
    \item[\dag]only relevant for autoencoders
    \item[\ddag]authors did not recommend parameters for their approach
    \end{tablenotes}
    \end{threeparttable} 
    \label{tab:hyperparameters}
\end{table}

\subsection{Datasets}\label{sec:datasets}

There are not many public benchmark data streams for change detection. 
Thus we generate our own from seven real-world (rw) and synthetic (syn) classification datasets, similar to \citep{faber_watch_2021,faithfull_combining_2019}. We simulate changing data streams\footnote{Available at \href{https://github.com/heymarco/AdaptiveBernsteinChangeDetector}{https://github.com/heymarco/AdaptiveBernsteinChangeDetector}} by sorting the data by label, unless stated otherwise. If the label changes, a change has occurred. 
\reviewerB
In real-world data streams, the number of observations between changes depends on each dataset, reported below. In the synthetic streams, we introduce changes every 2000 observations, which is a relatively large interval, to assess whether some approaches generate many false alarms.
\erevision
The generators base on the following datasets: 
\begin{itemize}
    \item \textbf{HAR (rw):} The dataset \emph{Human Activity Recognition with Smartphones}~\citep{anguita_public_2013} ($d=561$) bases on smartphone accelerometer and gyroscope readings for different actions a person performs. \reviewerB A change occurs on average every 1768 observations. \erevision 
    \item \textbf{GAS (rw):} This data set~\citep{vergara_gas_2011} ($d=128$) contains data from 16 sensors exposed to 6 gases at various concentrations.
    \reviewerB A change occurs on average every 2265 observations. \erevision 
    \item \textbf{LED (syn):} The LED generator samples instances representing a digit on a seven segment display. It contains 17 additional random dimensions. We add changes by varying the probability of bit-flipping in the relevant dimensions.
    \item \textbf{RBF (syn):} The RBF generator \citep{bifet_leveraging_2010} starts by drawing a fixed number of centroids. For each new instance, the generator chooses a centroid at random and adds Gaussian noise. 
    To create changes, we increment the seed of the generator resulting in different centroids. We then use samples from the new generator in a subspace of random size.
    \item \textbf{MNIST, FMNIST, and CIFAR (syn):} Those data generators sample from the image recognition datasets MNIST~\citep{lecun_mnist_2010}, Fashion MNIST (FMNIST)~\citep{xiao_fashion-mnist_2017} ($d=784)$, and CIFAR~\citep{krizhevsky_learning_2009} ($d=1024$, grayscale).  
\end{itemize} 

Changes can occur rapidly (``abrupt'' or ``sudden'') or in time intervals (``gradual'' or ``incremental''). The shorter the interval, the more sudden the change. We vary the interval size between 1 and 300 unless stated otherwise. Real-world and image data do not have a ground truth for change subspaces and severity. Thus we generate three additional data streams: 
\begin{itemize}
    \item \textbf{HSphere (syn):} This generator draws from a $d^*$-dimensional hypersphere bound to $[0,1]$ and adds $d-d^*$ random dimensions. We vary the radius and center of the hypersphere to introduce changes. The change subspace contains those dimensions that define the hypersphere.
    \item \textbf{Normal-M/V (syn):} These generators sample from a $d^*$-di\-men\-sio\-nal normal distribution and add $d-d^*$ random dimensions. For type \textbf{M}, changes affect the distribution's mean, for \textbf{V} we change the distribution's variance.
\end{itemize}

\subsection{Change point detection}\label{exp:change-point}

We use precision, recall, and F1-score to evaluate the performance of the approaches at detecting changes. We define true positives (TP), false positives (FP) and false negatives (FN) as follows:
\begin{itemize}
    \item \textbf{TP:} A change was detected before the next change. 
    \item \textbf{FN:} A change was not detected before the next change.
    \item \textbf{FP:} A change was detected although no change occurred. 
\end{itemize}
Also, we report the mean time until detection (MTD) indicating the average number of instances until a change is detected. 

\begin{figure*}[htb]
    \centering
    \includegraphics[width=\linewidth]{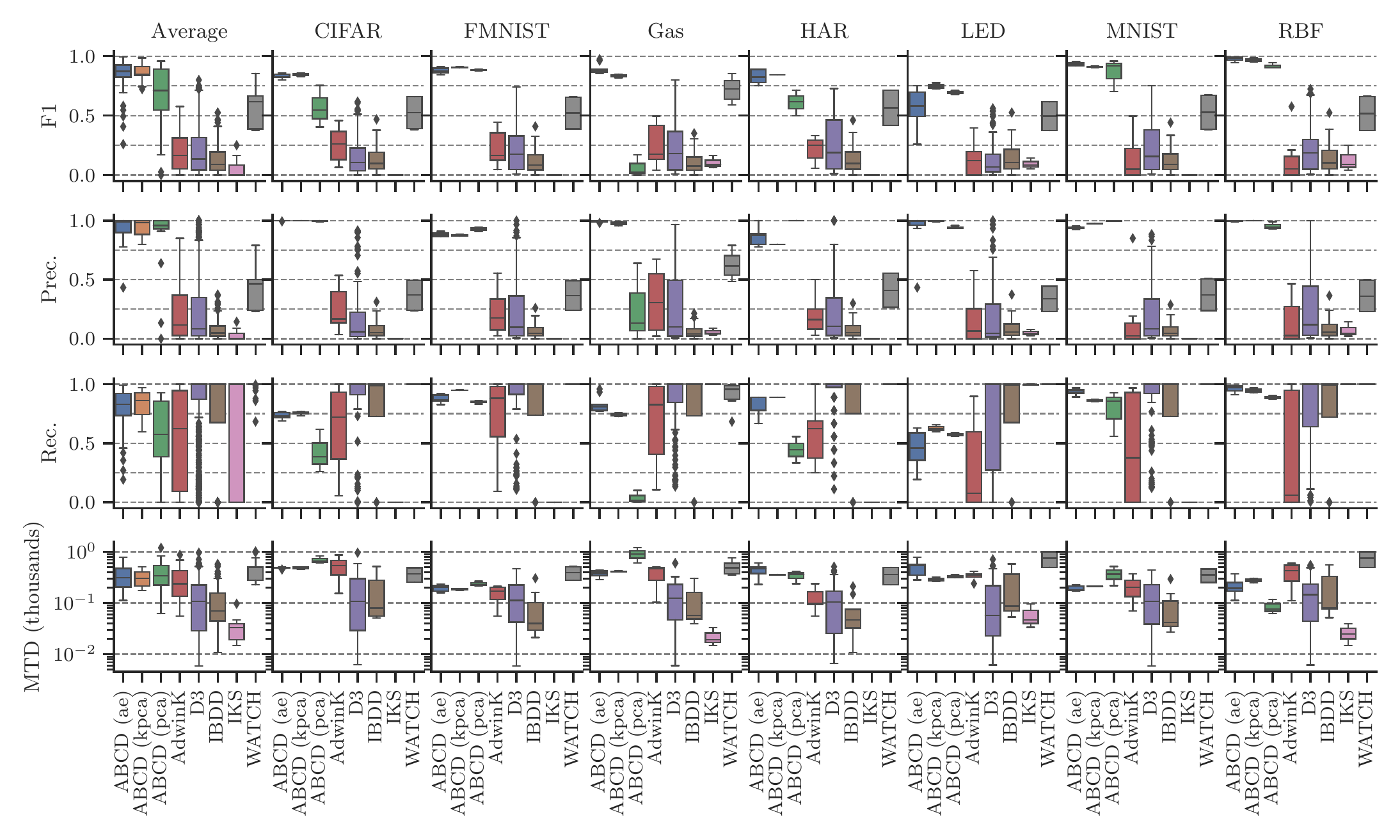}
    \caption{Change Point Detection: Results for different algorithms and datasets; each box contains the results for the evaluated grid of parameters.}
    \label{fig:results-cp}
\end{figure*} 

\Cref{fig:results-cp} shows F1-score, precision, recall, and MTD for all datasets and algorithm, as well as a column ``Average'' that summarizes across datasets. 
Each box contains the results for the grid of hyperparameters shown in \Cref{tab:hyperparameters}. 
We see that our approach outperforms its competitors w.r.t. F1-score and precision. It also is competitive in terms of recall, though it loses against IKS, IBDD, and WATCH. These approaches seem overly sensitive. 
The results also indicate that ABCD works well for a wide range of hyperparameters. 
\reviewerA
One reason is that ABCD uses adaptive windows, thereby eliminating the effect of a window size parameter (demonstrated in Section~\ref{exp:window_types}).
Another reason is that ABCD detects changes in reconstruction loss irrespective of the actual quality of the reconstructions. For instance, Kernel PCA and PCA produce reconstructions of different accuracy in our experiments. However, for both models, the average accuracy changes when the stream changes, which is what our algorithm detects. Refer to \Cref{app:reconstruction-loss} for an illustration of the models' reconstruction loss over time. Hence, our reported results do not yield information about the actual accuracy of the underlying encoder-decoder models. 
\erevision

ABCD has a higher MTD than D3, IBDD, and IKS, i.e., it requires more data to detect changes. 
However, those competitors are much less conservative and detect many more changes than exist in the data. Hence they have low precision but high recall~--- this leads to a lower MTD.  

\Cref{tab:res-summary} reports the results of all approaches with their best hyperparameters. WATCH and D3 achieve relatively high F1-score and precision. In fact, those approaches are our strongest competitors although we still outperform them by at least 3\,\%. Further, WATCH has an MTD of 626, which is more than \ac{name} while D3 and ABCD have a comparable MTD. 

\reviewerB
ABCD has much higher precision than its competitors. We assume this is because ABCD (1) leverages the relationships between dimensions, in comparison to AdwinK, IKS, or IBDD, and (2) learns those relationships more effectively than, say, D3 or WATCH. For example, we observed in our experiments that WATCH was frequently unable to accurately approximate the Wasserstein distance in high-dimensional data. 

ABCD has lower recall than most competitors, partly due to their over-sensitivity. In this regard, our approach might benefit from application-specific encoder-decoder models that leverage structure in the data, such as spacial relationships between the pixels of an image, more effectively.
\erevision

\begin{table}[t]
\centering
\caption{Results of approaches with their best hyperparameter configuration w.r.t. F1 score averaged over all data sets.}
\begin{subtable}[t]{.54\textwidth}
\vspace{0cm}
\begin{tabular}{llrrrr}
\toprule
\xrowht[()]{2.54pt}
Approach       &    F1 &  Prec. &  Rec. &     MTD \\
\midrule
ABCD (ae) &  0.90 &   0.96 &  0.87 & 250\\
ABCD (kpca) &  0.88 &   0.95 &  0.84 &   312\\
ABCD (pca) &  0.73 &   0.93 &  0.65 &   442\\
\bottomrule
\end{tabular}
\end{subtable}
\hfill
\begin{subtable}[t]{.45\textwidth}
\vspace{0cm}
\begin{tabular}{llrrrr}
\toprule
    AdwinK &  0.46 &   0.48 &  0.57 &   400\\
    D3 &  0.70 &   0.63 &  0.82 &   251\\
    IBDD &  0.45 &   0.30 &  0.97 &   396\\
    IKS &  0.08 &   0.04 &  0.43 &    24\\
    WATCH &  0.69 &   0.54 &  1.00 &   626\\
\bottomrule
\end{tabular}
\end{subtable}
\label{tab:res-summary}
\end{table} 

\subsection{Change subspace and severity}\label{exp:subspace-severity}

We now evaluate change subspace identification and change severity estimation.
We set $d=\{24, 100, 500\}$ and vary the change subspace size $d^*$ randomly in $[1, d]$ (except for LED, here the subspace always contains dimensions 1--7). We set the ground truth for the severity to the absolute difference between the parameters that define the concepts, e.g., the hypersphere-radius in HAR before and after the change. 
We report an approach's subspace detection accuracy (SAcc.), where true positives (true negatives) represent those dimensions that were correctly classified as being member (not being member) of the change subspace. We use Spearman's correlation between the detected severity and the ground truth. We also report the F1-score for detecting change points.

\begin{figure}[htb]
    \centering
    \includegraphics[width=\linewidth]{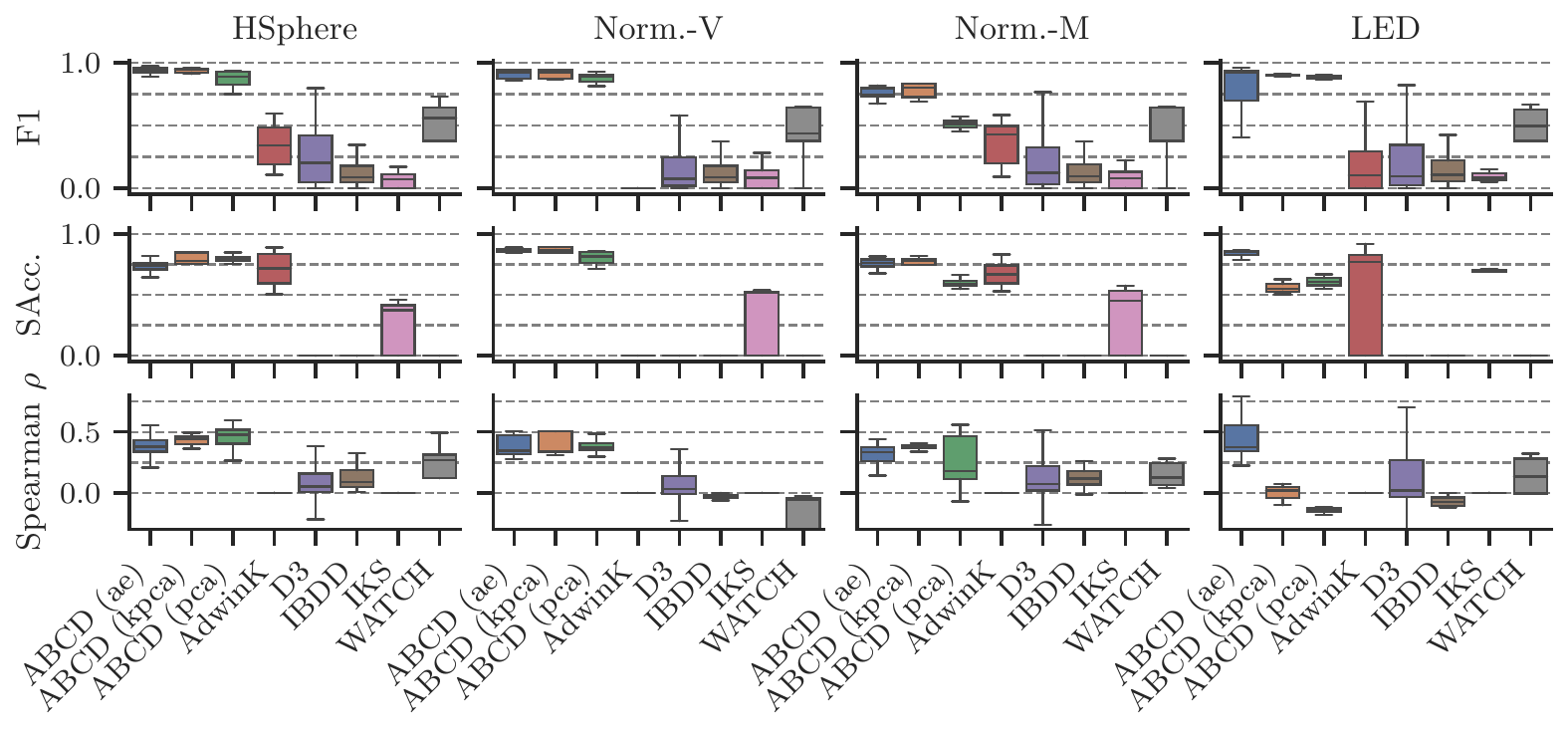}
    \caption{Results for evaluating change subspace and severity.}
    \label{fig:evaluation-region-severity}
\end{figure}

\Cref{fig:evaluation-region-severity} shows our results. As before, each box summarizes the results for the grid of evaluated hyperparameters. 
\reviewerB
Comparing the two approaches, AdwinK and IKS, that monitor each dimension separately, we see that the former can only detect changes that affect the mean of the marginal distributions (i.e., on Norm-M, LED). At the same time, the latter can also detect other changes (e.g., changes in variance). This is expected since AdwinK compares the mean in two windows while IKS compares the empirical distributions.
\erevision

Regarding subspace detection, our approach achieves an accuracy of 0.72 for PCA, 0.78 for autoencoders, and 0.79 for Kernel PCA. AdwinK performs similarly well when changes affect the mean of the marginal distributions. 
\reviewerB
Except on LED, IKS performs worse than ABCD and AdwinK, presumbably because IKS issues an alarm as soon as a single dimension changed. 
\erevision

\reviewerB
The estimates of our approach correlate more strongly with the ground truth than those of competitors, with an average of 0.31 for PCA, 0.36 for Kernel PCA and 0.37 for Autoencoders. However, we expect more specialized models to better than our tested models. 
On LED, PCA-based models appear to struggle to separate patterns from noise, resulting in poor noise level estimates and low correlation scores.
\erevision

\subsection{Parameter sensitivity of \ac{name}} \label{exp:parameters}

\paragraph{Sensitivity to $\eta$} \Cref{fig:eta} plots F1 for different datasets over $\eta$. We observe that the size of the bottleneck does not significantly impact the change detection performance of \ac{name}~(ae) and \ac{name}~(kpca). For PCA, however, too large bottlenecks seem to inhibit change detection on CIFAR, Gas, and MNIST. For those datasets, we assume that the change occurs along the retained main components, rendering it undetectable; see \Cref{app:detectable-undetectable} for an illustration. \Cref{fig:eta-subspace-severity} shows the subspace detection accuracy and Spearman's $\rho$. The influence of $\eta$ on both metrics is low. 
\reviewerB
As mentioned earlier, we assume that a \emph{change} in reconstruction loss, rather than the quality of reconstruction itself, is crucial for ABCD.
An exception is the LED dataset, on which PCA and Kernel-PCA are unable to provide a measure that positively correlates with change severity. 
We hypothesize that those methods struggle to separate patterns from noise, resulting in poor noise level estimates and low correlation scores. 
\erevision

\paragraph{Sensitivity to $E$}
\Cref{fig:E} plots our approach's performance for different choices of $E$. Overall, our approach seems to be robust to the choice of $E$. On LED, however, larger choices of $E$ lead to substantial improvements in F1-score. 
\reviewerB
The reason may be that the autoencoder does not converge to a proper representation of the data for small $E$. 
\erevision
To avoid this, we recommend choosing $E \geq 50$ and to increase the value if one observes that the model has not yet converged sufficiently. 

\paragraph{Sensitivity to $\tau$}
\Cref{fig:tau-sensitivity} investigates how the choice of $\tau$ affects the performance of \ac{name} at detecting subspaces. Since the change score in \Cref{eq:bernstein-p} provides an upper bound on the probability that a change occurred, the function can return values greater than 1, i.e,. in the range $(0,4]$. 
Hence we vary $\tau$ in that range and record the obtained subspace detection accuracy. For all approaches we achieve optimal accuracy at $\tau \approx 2.5$.
\reviewerB
This is probably because some dimensions could change more severely than others, resulting in variations of the change scores observed in the different dimensions of the change subspace. 
\erevision
Based on our findings we recommend $\tau=2.5$ as default.

\begin{figure}[tb]
     \centering
     \begin{subfigure}[t]{\linewidth}
         \centering         \includegraphics[width=\textwidth]{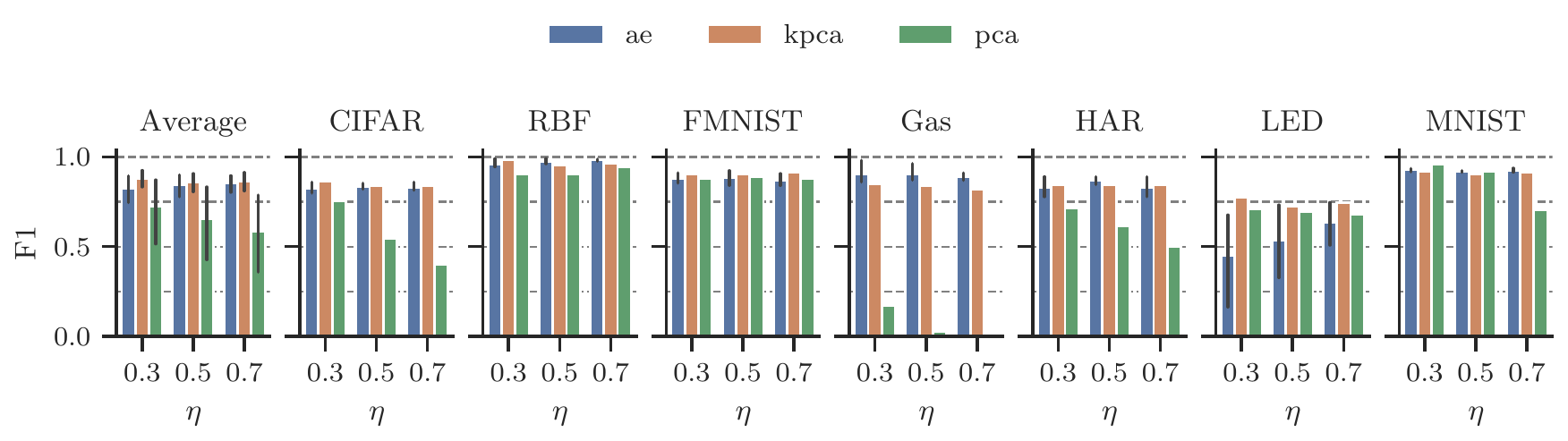}
         \caption{Influence of $\eta$ on the identification of changes.}
         \label{fig:eta}
     \end{subfigure}
          \begin{subfigure}[t]{\linewidth}
         \centering
         \includegraphics[width=\textwidth]{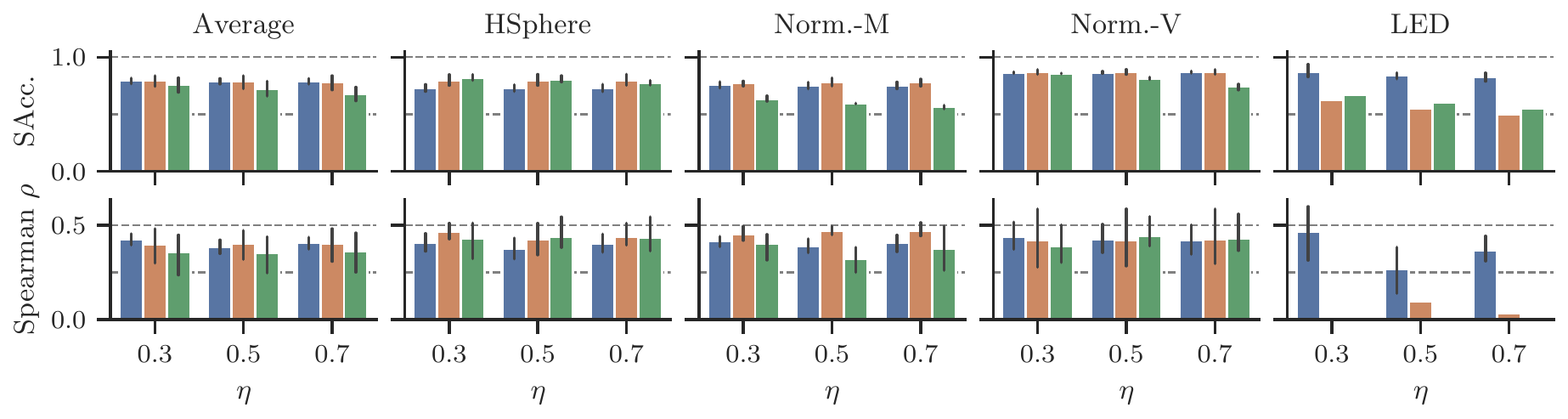}
         \caption{Influence of $\eta$ on the estimation of change subspaces and severity.}
         \label{fig:eta-subspace-severity}
     \end{subfigure}
     \begin{subfigure}[t]{\linewidth}
        \centering
        \includegraphics[width=.9\linewidth]{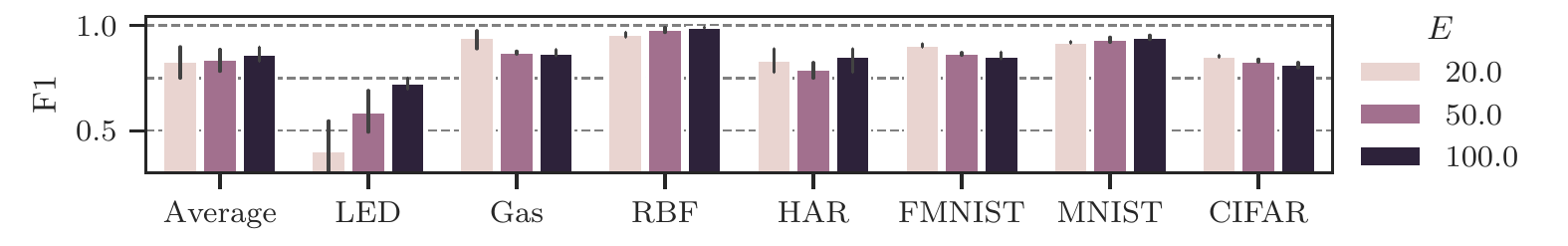}
        \caption{Change detection performance of \ac{name} (ae) depending on $E$.}
        \label{fig:E}
     \end{subfigure}
     \begin{subfigure}[t]{\linewidth}
        \centering
        \includegraphics[width=.85\linewidth]{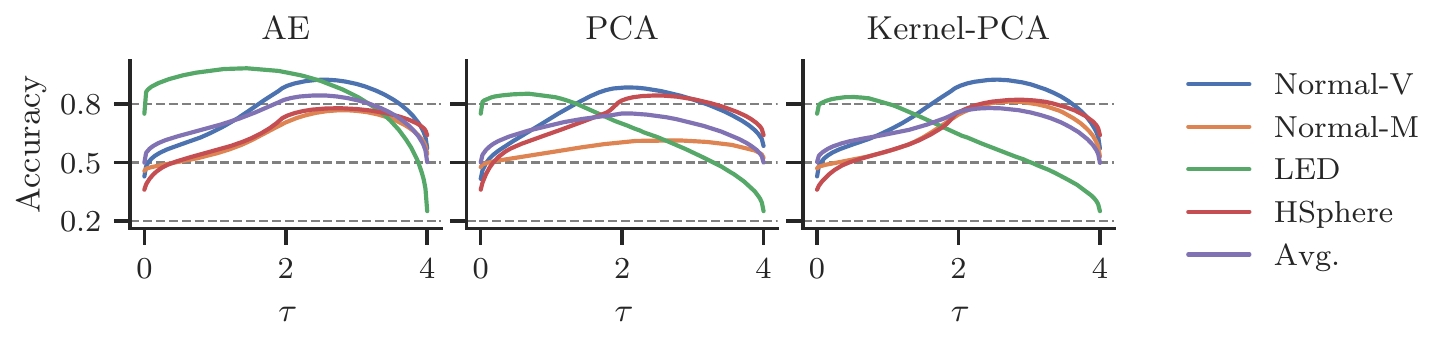}
        \caption{Subspace detection accuracy of \ac{name} depending on $\tau$.}
        \label{fig:tau-sensitivity}
     \end{subfigure}
    \caption{Sensitivity of our approach to its hyperparameters.} 
    \label{fig:cp-sensitivity}
\end{figure}

\subsection{Ablation study on window types}\label{exp:window_types}

Next, we investigate the effect of different window types on change detection performance. We evaluate those commonly found in change detection literature (and in our competitors) and couple them with encoder-decoder models and the probability bound in \Cref{eq:bernstein-p}. In particular, we compare: (1) Adaptive windows (AW), as in ADWIN, AdwinK, and our approach, (2) fixed reference windows (RW), as in IKS, (3) sliding windows (SW), as in WATCH, and (4) jumping windows (JW), as in D3. The latter ``jump'' every $\rho \lvert W \rvert$ instances.

We evaluate the hyperparameters mentioned in \Cref{tab:hyperparameters}. 
For example, because D3 uses jumping windows, we include the evaluated hyperparameters for D3 in our evaluation of jumping windows. 
In addition, we extend the grid with other reasonable choices since we already preselected those in \Cref{tab:hyperparameters} for our competitors in a preliminary study. 
For ABCD we use $\eta=0.5$ and $E=50$.

\Cref{tab:window-ablation} reports the average over all hyperparameter combinations. AWs yield higher F1-score and recall than other techniques, while precision remains high ($\geq 0.95$). SWs have a lower MTD than AWs and hence seem to require a fewer instances until they detect a change. \reviewerB This is expected: in contrast to sliding windows, adaptive windows allow the detection of even slight changes after a longer period of time, resulting in both higher MTD and recall. \erevision 

\begin{table}[t]
    \centering
    \caption{Ablation: Using encoder-decoder models with different window types.}
\begin{tabular}{llrrrr}
\toprule
Model & Window &    F1 &  Prec. &  Rec. &    MTD \\
\midrule
AE & AW &  \textbf{0.83} &   0.95 &  \textbf{0.78} &  455.6 \\
    & RW &  0.53 &   \textbf{1.00} &  0.21 &  403.6 \\
    & SW &  0.62 &   \textbf{1.00} &  0.40 &  \textbf{207.2} \\
        & JW &  0.52 &   0.79 &  0.46 &  239.1 \\
\midrule
KPCA & AW &  \textbf{0.83} &   0.99 &  \textbf{0.75} &  309.0 \\
    & RW &  0.56 &   \textbf{1.00} &  0.23 &  456.3 \\
    & SW &  0.68 &   \textbf{1.00} &  0.49 &  \textbf{202.8} \\
    & JW &  0.50 &   0.77 &  0.33 &  266.2 \\
\midrule
PCA & AW &  \textbf{0.72} &   0.98 &  \textbf{0.55} &  355.3 \\
    & RW &  0.36 &   \textbf{1.00} &  0.09 &  400.0 \\
    & SW &  0.53 &   \textbf{1.00} &  0.33 &  \textbf{206.7} \\
    & JW &  0.46 &   0.75 &  0.20 &  239.9 \\
\bottomrule
\end{tabular}
    \label{tab:window-ablation}
\end{table}

\subsection{Runtime analysis}

\subsubsection{Comparison with competitors}

\Cref{fig:mtpo} shows the mean time per observation (MTPO) of ABCD and its competitors for $d\in\{10, 100, 1000, 10,000\}$ running single-threaded. 
The results are averaged over all evaluated parameters (\Cref{tab:hyperparameters}). \ac{name}~(id) replaces the encoder-decoder model with the identity which does not cause overhead. This allows measuring how much the encoder-decoder model influences ABCD's runtime. The results confirm that the runtime of ABCD alone, i.e, without the encoding-decoding-process, remains unaffected by a stream's dimensionality.

\begin{figure*}[htb]
\centering
    \begin{subfigure}[b]{\linewidth}
        \centering
        \includegraphics[width=\linewidth]{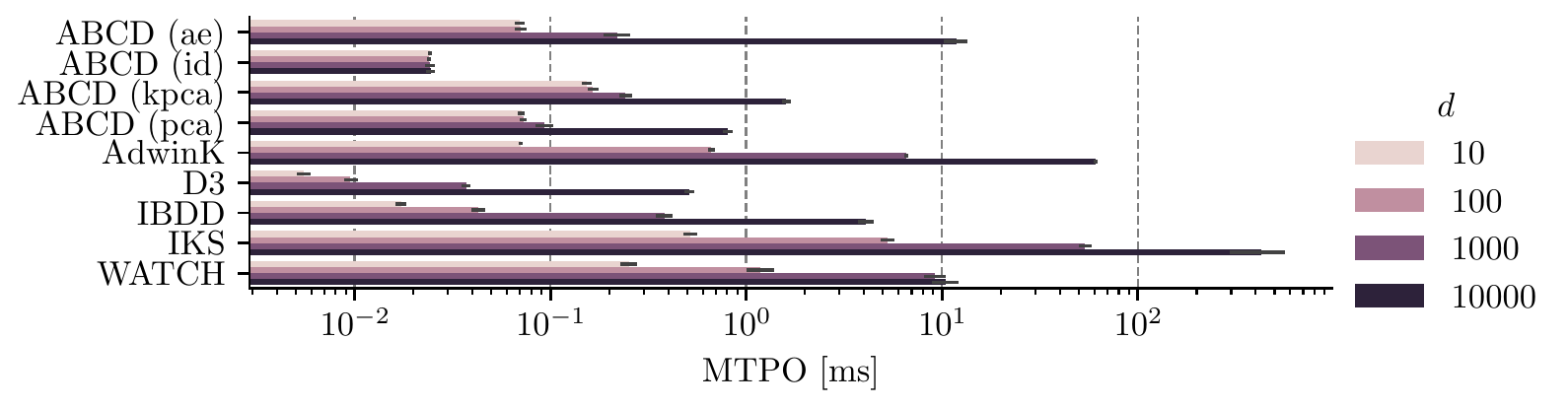}
        \caption{Mean time per observation in milliseconds.}
        \label{fig:mtpo}
    \end{subfigure}
    \begin{subfigure}[b]{\linewidth}
       \centering
        \includegraphics[width=\linewidth]{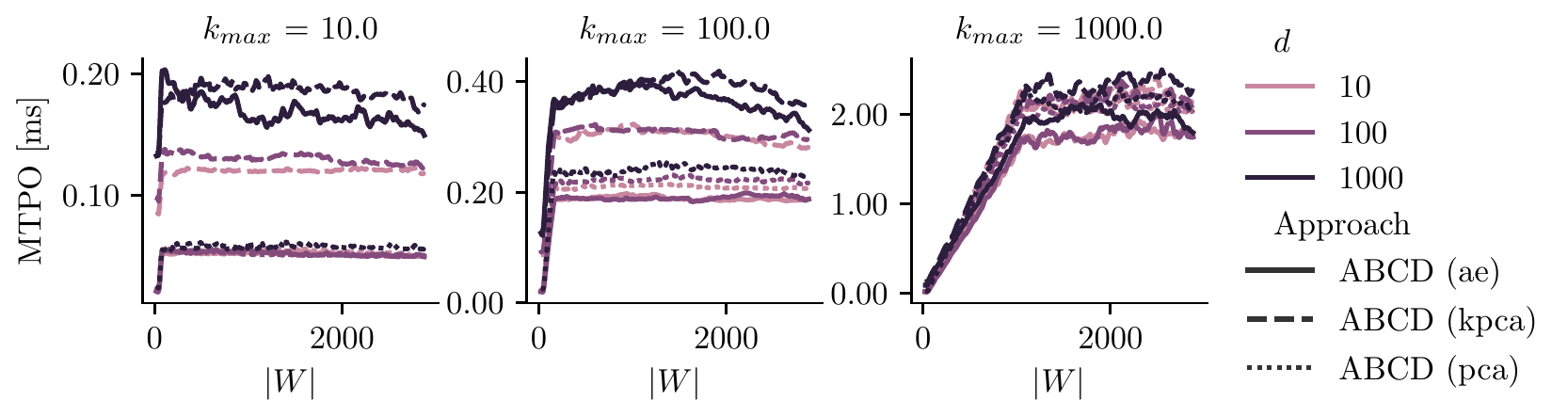}
        \caption{MTPO of ABCD over $\lvert W \rvert$ using $E=50$ and $\eta = 0.5$ varying $k_{max}$.}
        \label{fig:runtime-k}
    \end{subfigure}
    \begin{subfigure}[b]{\linewidth}
       \centering
        \includegraphics[width=\linewidth]{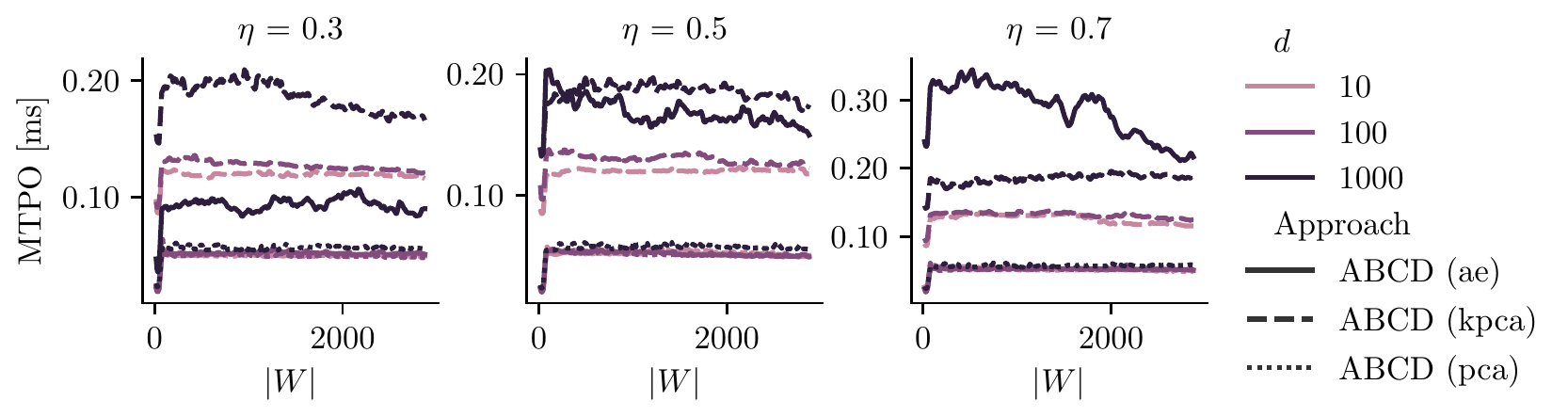}
        \caption{MTPO of ABCD over $\lvert W \rvert$ using $E=50$ and $k_{max} = 100$ varying $\eta$.}
        \label{fig:runtime-eta}
    \end{subfigure}
\caption{Runtime analysis of ABCD.}
\label{fig:runtime}
\end{figure*} 

We observe that our approach is able to process around 10,000 observations per second for $d\leq 100$. This is more than IKS, WATCH and AdwinK (except at $d=10$) but slower than D3 and IBDD. The reason is that our approach evaluates $k_{max}$ possible change points in each time step. In high-dimensional data, our competitors' MTPO grows faster than ABCD with PCA or KPCA; in fact, ABCD (pca) is second fastest after D3 for $d\geq 1000$. An exception is WATCH at $d=10000$. This is due to an iteration cap for approximating the Wasserstein distance restricting the approach's MTPO.

\subsubsection{Runtime depending on window size}

Next, we investigate \ac{name}'s runtime for different choices of $k_{max}$ and $\eta$. We run this experiment on a single CPU thread. For all three evaluated models, the encoding-decoding of an observation has a time complexity of $\mathcal{O}(\eta d^2)$; hence, ABCD's processing time of one instance is in $\mathcal{O}(\eta d^2 + k_{max})$. We therefore expect a quadratic increase in execution time with dimensionality and a linear increase with $\eta$ and $k_{max}$ when running on a single core.  

The results in \Cref{fig:runtime-k} show the influence of $k_{max}$ on the execution time: $k_{max}$ effectively restricts the MTPO as soon as $\lvert W \rvert = k_{max}$. Afterwards, MTPO remains unaffected by $\lvert W\rvert$. This also confirms that one can evaluate different possible change points in constant time using the proposed aggregates. 

We show the runtime for different choices of bottleneck-size $\eta$ in \Cref{fig:runtime-eta}. $\eta$ has little influence on the runtime of \ac{name} with PCA and Kernel-PCA
. However, coupled with an autoencoder (implemented in pytorch) we observe the expected linear increase in execution time from $0.1$\,ms for $\eta = 0.3$ to $0.3$\,ms for $\eta = 0.7$. Considering that change detection performance has shown to remain stable even for smaller choices of $\eta$, we recommend $\eta\leq 0.5$ as default. 

\section{Conclusion} \label{sec:conclusion}

We presented a change detector for high-dimen\-sio\-nal data streams, called \ac{name}, that monitors the reconstruction loss of an encoder-decoder-model in an adaptive window with a change score based on Bernstein's inequality. Our approach identifies changes and change subspaces, and provides a severity measure that correlates with the ground truth. 
\reviewerB
Since encoder-decoder models are already used in many domains~\citep{rani_BigDataDimensionality_2022}, our approach is widely applicable.
\reviewerA
In the future, it would thus be interesting to test ABCD with application or data specific encoder-decoder models. For example, one might observe even better performance on streams of image data when applying convolutional autoencoders. \erevision
Last, ABCD could also benefit from a theoretical analysis of the relationship between changes in data distribution and the loss of different encoder-decoder models. 

\backmatter

\ifpreprint
\else
\section*{Statements and Declarations}

The authors have no competing interests to declare that are relevant to the content of this article.
\fi

\bibliography{references.bib}

\clearpage
\appendix
\section{Appendix}

\subsection{Training of autoencoder}\label{app:ae-training}

\Cref{alg:ae-training} describes the training of the autoencoder model as done in our experiments. First, we collect the training data from the current window $W$ (line~2). Afterwards we perform gradient descent on $X_{train}$ for $E$ epochs at a learning rate of $lr$.

\begin{algorithm}[htb]
\begin{algorithmic}[1]
\caption{Autoencoder training}
\label{alg:ae-training}
\Require{$W$, learning rate $lr$, number of training epochs $E$}
\Procedure{\textsc{TrainAE}}{$W, lr, E$}
\State $X_{train}\gets \{x_i\ \forall\ (-, -, x_i) \in W\}$
\State $\phi,\psi\gets$ \textsc{NewEncoder}(), \textsc{NewDecoder}()
\ForAll{epochs $E$}
    \State \textsc{GradientDescent}$(\psi\circ\phi, X_{train}, lr)$
\EndFor
\State Return $\phi,\psi$
\EndProcedure
\end{algorithmic}
\end{algorithm}

\subsection{Detectable and undetectable change for ABCD (pca)}\label{app:detectable-undetectable}

This section illustrates under which conditions one can use principal component analysis to detect change. \Cref{fig:detectable-undetectable} shows data from two distributions: black points (e.g., before the change) plus the associated main principle component, and blue points (e.g., after the change). On the left, the change affects the correlation between Dim.~1 and Dim.~2. This leads to an increased reconstruction error for the points highlighted in blue. On the right, the change occurs along the main principle component. I.e., the variance along the main principle component has increased. Such kind of change is undetectable by ABCD~(pca) as the reconstruction error remains unchanged. 

\begin{figure}[htb]
    \centering
    \includegraphics[width=.93\linewidth]{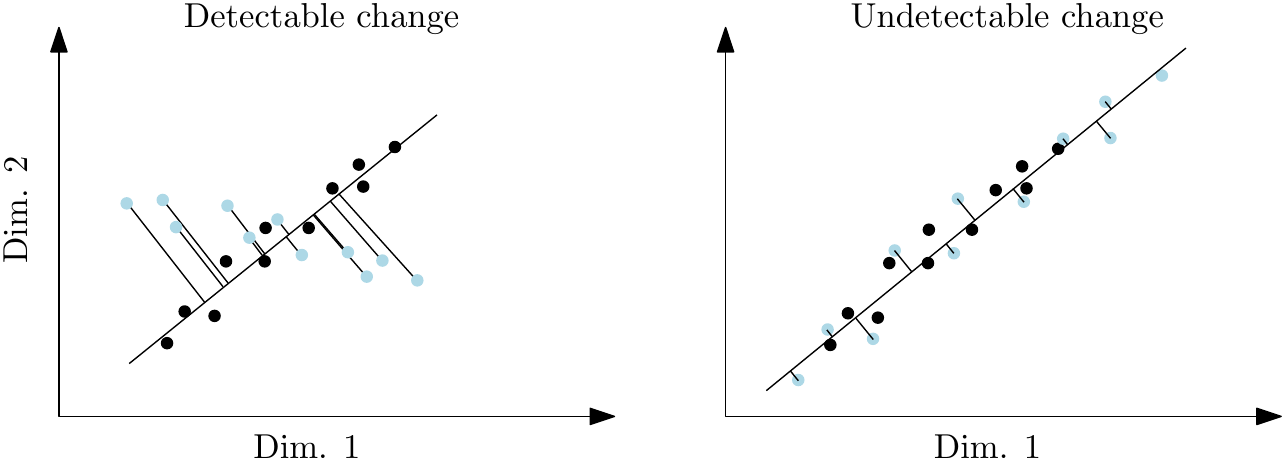}
    \caption{Illustration of detectable and undetectable change using PCA.}
    \label{fig:detectable-undetectable}
\end{figure}

\subsection{Reconstruction loss over time}\label{app:reconstruction-loss}

\Cref{fig:reconstruction-loss} shows the reconstruction loss of the evaluated encoder-decoder models over the length of the stream. We observe that indeed the reconstruction loss decreases with increasing bottleneck size (controlled by $\eta$), and with increasing number of training epochs $E$ (first three columns). Further, we see that regardless of $E$, $\eta$, or the type of model, the reconstruction loss typically changes after a change point. After the change was detected, ABCD learns the new concept, which mostly leads to a decrease in reconstruction loss. Last, we observe that the theoretical limit of $M=1$ for the absolute difference between the reconstruction loss and its expected value is overly conservative. A value of $M=0.1$ seems to be a more realistic choice. 

\begin{figure}[htb]
    \centering
    \includegraphics[width=\linewidth]{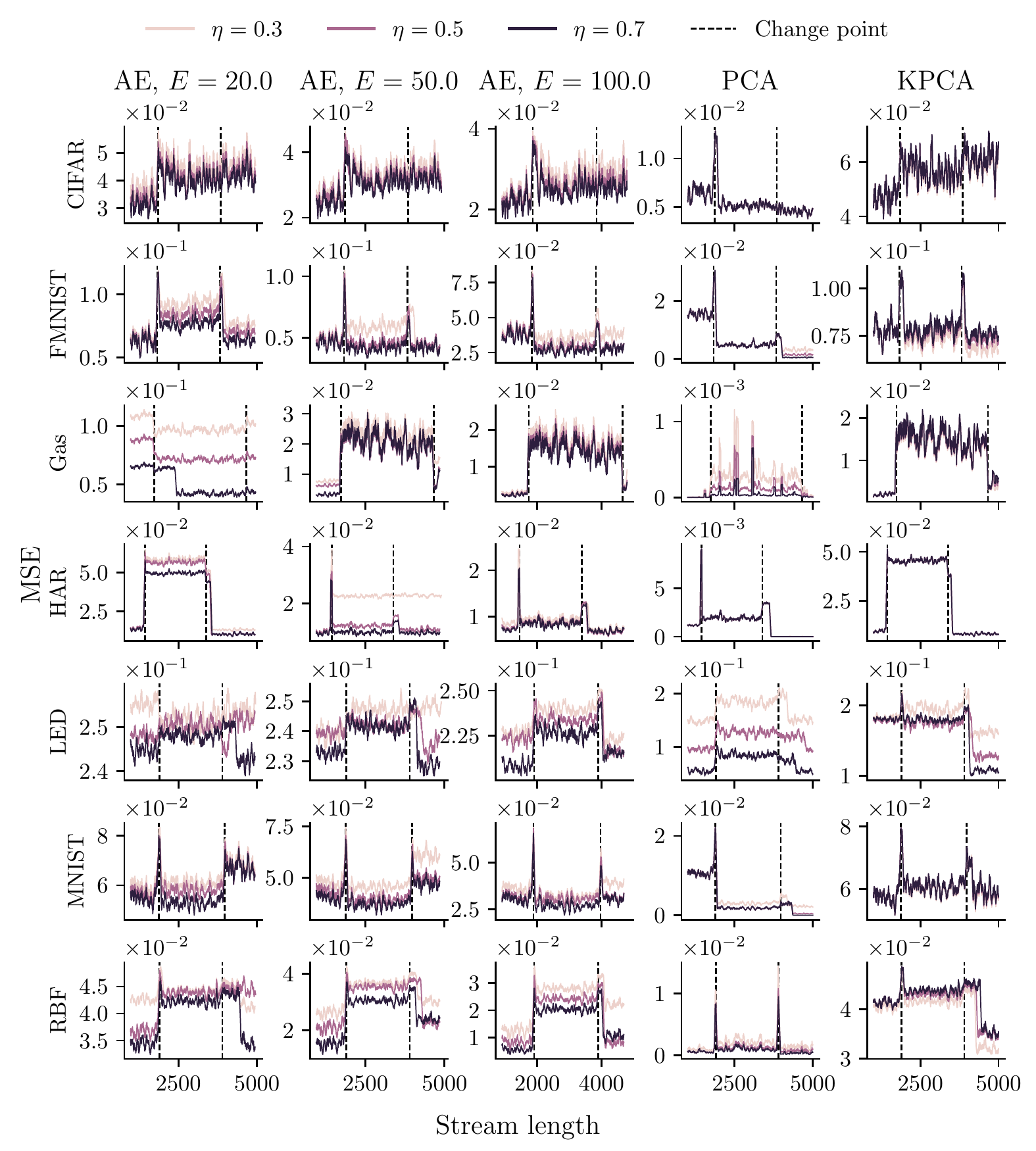}
    \caption{Reconstruction loss over the length of the stream.}
    \label{fig:reconstruction-loss}
\end{figure}

\end{document}